\documentclass[twoside]{article}
\usepackage[accepted]{aistats2025}

\usepackage[round]{natbib}

\usepackage{url}            % simple URL typesetting
\usepackage{booktabs}       % professional-quality tables
\usepackage{tablefootnote} %footnotes in tables
\usepackage{amsfonts}       % blackboard math symbols
\usepackage{nicefrac}       % compact symbols for 1/2, etc.
\usepackage{microtype}      % microtypography
\usepackage{xcolor}         % colors
\usepackage{graphicx}

\usepackage{tcolorbox}

\usepackage{amsmath,amsthm}
\newtheorem{theorem}{Theorem}

\newtheorem{remark}{Remark}

\newtheorem{principle}{Principle}

% check and cross symbols 
\usepackage{pifont}% http://ctan.org/pkg/pifont
\definecolor{dodgerblue}{rgb}{0.12, 0.56, 1.0}
\newcommand{\cmark}{{\color{dodgerblue}\ding{51}}}%
\newcommand{\xmark}{{\color{red}\ding{55}}}%

\newcommand{\gmark}{{\color{orange}\mathrm{dep.}\ \Gamma}}
%%%%%%%%%%%%%%%%%%%%%%%%%%%%%%%%%%%%%%%%%%%%%%%%%%%%%%%%%%%%%%%%%%%%%%
\usepackage{placeins} %FloatBarrier

\usepackage{enumitem}
% % % % % % % % % % % % % % % TABLES % % % % % % % % % % % % % %
\usepackage{tabularx,booktabs} % Top and bottom rules for tables
% More space in tables
\usepackage{makecell}
\usepackage{multirow, makecell}
% % % % % % % % % % % % % % % % % % % % % % % % % % % %
\usepackage{url}

\usepackage[ruled]{algorithm2e}
\usepackage{setspace}
%%%% Load Hyperref at the very end (fixes some issues) %%%%
\usepackage{hyperref}

%%%%%%%%%%%%% Commands %%%%%%%%%%%%%%%%%%%%%%
\newcommand{\ie}{\textit{i.e.}, }
\newcommand{\eg}{\textit{e.g.}, }
%R and N
\newcommand{\R}{\mathbb{R}}
\newcommand{\N}{\mathbb{N}}
%function and problems
\newcommand{\F}{\mathfrak{F}} %space of functions
 %space of problems
%
\newcommand{\M}{\mathcal{M}} %Model
\newcommand{\A}{\mathcal{A}} %Algorithm
\newcommand{\C}{\mathcal{C}} %Oracle
\newcommand{\U}{\mathcal{U}} %Update
\renewcommand{\S}{\mathcal{S}} %Storage

\newcommand{\T}{\mathcal{T}} %Transformation
\newcommand{\nf}{n_{i}}%num of features

\newcommand{\hx}{\hat{x}}
\newcommand{\hf}{\hat{f}}

\newcommand{\gf}{\nabla f}
\newcommand{\Hf}{\nabla^2 f}
\newcommand{\dg}[1]{\Delta g_{#1}}
\newcommand{\norm}[1]{\left\Vert #1 \right\Vert}

\begin{document}

\twocolumn[

\aistatstitle{From Learning to Optimize to Learning Optimization Algorithms}

\aistatsauthor{Camille Castera\And Peter Ochs}

\aistatsaddress{University of Bordeaux\\ Bordeaux INP, CNRS, IMB, UMR 5251 \\ Talence, France\And Department of Mathematics and Computer Science\\Saarland University\\ Saarbrücken, Germany}
]

\begin{abstract}
  Towards designing learned optimization algorithms that are usable beyond their training setting, we identify key principles that classical algorithms obey, but have up to now, not been used for Learning to Optimize (L2O).
  Following these principles, we provide a general design pipeline, taking into account data, architecture and learning strategy, and thereby enabling a synergy between classical optimization and L2O, resulting in a philosophy of \emph{Learning Optimization Algorithms}.
  As a consequence our learned algorithms perform well far beyond problems from the training distribution.
  We demonstrate the success of these novel principles by designing a new learning-enhanced BFGS algorithm and provide numerical experiments evidencing its adaptation to many settings at test time.
\end{abstract}

%%%%%%%%%%%%%%%%%%%%%%%%%%%%%%%%%%%%%%%%%%%%%%%%%%%%%%%%%%%%%%%%%%%%%%%%%%%%%%%%%%%%%
\section{INTRODUCTION}

Learning to Optimize (\textbf{L2O}) is a modern and promising approach towards designing optimization algorithms that reach a new level of efficiency. L2O is even the state-of-the-art approach in some applications \citep{liu2019alista,zhang2021plug}. However, it mostly excels when designed and trained specifically for each application and
 still fails to be widely applicable without retraining models.
This need to adapt L2O specifically to each task is especially problematic given how difficult designing L2O algorithm is: the design is prone to many conceptional pitfalls and training models is not only computationally expensive \citep{chen2022learning} but also notoriously hard in L2O \citep{metz2019understanding}.
In contrast, standard optimization methods are widely applicable, sometimes way beyond the setting they were originally designed for, as attested for example by the success of momentum methods \citep{polyak1964some} in deep learning \citep{jelassi2022towards}. This transfer of performance to different classes of problems is often achieved only at the cost of tuning a few scalar hyper-parameters. Analytically designed optimization algorithms usually come with theoretical guarantees, which most L2O algorithms lack completely of.

To bring L2O algorithms closer to actual Learned Optimization Algorithms (\textbf{LOA}),
we identify key theoretical principles that hand-crafted optimization algorithms follow and provide strategies ensuring that L2O approaches inherit these properties.
Thereby we systematically unify the advantages of both worlds: flexible applicability and theoretically controlled convergence guarantees from mathematical optimization, and complex operations from machine learning beyond what can be analytically designed.
We illustrate our new approach by designing a learning-enhanced BFGS method. We present numerical and theoretical evidence that our approach benefits L2O.

\section{RELATED WORK}\label{ref::relatedwork}

\paragraph{Learning to Optimize} L2O \citep{li2016learning} is an active topic of research. A lot of work focuses on unrolling \citep{gregor2010learning,ablin2019learning,huang2022unrolled,liu2019alista} and ``Plug and Play'' \citep{venkatakrishnan2013plug,meinhardt2017learning,zhang2021plug,Terris2023} approaches which improve over hand-crafted algorithms in several practical cases. 
L2O often lacks theoretical guarantees, with a few exceptions where convergence is enforced via ``safeguards'' that restrict the method \citep{moeller2019controlling,heaton2023safeguarded,martin2024learning}, or estimated statistically in distribution \citep{sucker2023pac}. One can also learn ``good initializations'' before using convergent algorithms \citep{sambharya2024warmstart}.
The L2O literature is broad, see \cite{chen2022learning,amos2023tutorial} for detailed overviews of the topic.

\paragraph{Design principles} One of our contributions (see Section~\ref{sec::principles}) is to enforce robustness to geometric transformations in L2O. This is related to ``geometric deep learning'' \citep{bronstein2021geometric}: the more general topic of preserving equivariance properties (defined in Section~\ref{sec::principles}) in the context of learning. It has many applications \citep{Romero2020,Chen2021,Hutchinson2021,Terris2023,Chen2023,keriven2024functions,levin2024any}.
It is also connected to learning on sets \citep{zaheer2017deep,lee2019set}.
To the best of our knowledge equivariance for L2O is only considered by \citet{ollivier2017information}, from a probabilistic point of view and \citet{tan2023boosting}. The latter argues that equivariances properties generally benefit L2O algorithms. Our work rather identifies specific key equivariance properties and studies how to handle them in every step of a general L2O pipeline (through Algorithm~\ref{algo::GenL2O}).
Parallel to our approach, \cite{liu2023towards} proposed to enforce convergence properties by design, hence stabilizing L2O methods, whereas we focus on generalization. Finally, improving the design of L2O algorithms from a practical perspective has been studied in \citep{wichrowska2017learned,metz2019understanding,metz2022practical}

\paragraph{Learning quasi-Newton methods}
We use learning to enhance a BFGS-like algorithm. BFGS \citep{broyden1970convergence,fletcher1970new,goldfarb1970family,shanno1970conditioning} is the most popular quasi-Newton (QN) algorithm and has been extensively analyzed \citep{greenstadt1970variations,dennis1977quasi,ren1983convergence}. Many extensions have been proposed, featuring limited memory \citep{liu1989limited}, sparse \citep{toint1981sparse} and non-smooth \citep{wang2022inertial} versions, or modifications provably faster in specific settings \citep{rodomanov2021greedy,jin2022sharpened}. Other approaches to make use of second-order derivatives only relying on gradients include  symmetric-rank-one methods \citep{conn1991convergence,NIPS2012_e034fb6b} and the dynamical inertial Newton family of methods \citep{alvarez2002second,attouch2016fast,attouch2022first} which is at the interface of first-and second-order optimization \citep{castera2024continuous}.

Several approaches, have previously been proposed to learn BFGS methods. A transformer model has been derived by \citep{gartner2023transformer}, and \citep{liao2023learning} considered learning on the fly in the online setting. A recent work \citep{lilearning} predicted a weighted average between DFP \citep{powell1983variable} and BFGS (a.k.a. a Broyden method). This is more akin to hyper-parameter tuning as their method remains in the span of Broyden's methods.
Our approach allows to build rather different learned QN algorithms by using a variational derivation of BFGS (see Section~\ref{sec::variational}), originating from \citep{greenstadt1970variations,goldfarb1970family} and which has been used in \citep{hennig2013quasi} for Bayesian optimization.

%%%%%%%%%%%%%%%%%%%%%%%%%%%%%%%%%%%%%%%%%%%%%%%%%%%%%%%%%%%%%%%%%%%%%%%%%%%%%%%%%
\section{SETTING AND PROBLEM STATEMENT}

We propose a mathematical formalism for L2O, akin to the ``semi-amortized'' framework from \citep{amos2023tutorial}. Compared to the latter we further decompose the algorithm into four pieces that will later allow us to mathematically discuss our main contribution of providing L2O algorithms with optimization properties in Section~\ref{sec::principles}.

\subsection{Mathematical Formalism\label{sec::mathformalism}}
We denote by $\N$ the set of non-negative integers and $\R$ the set of real numbers.
In what follows we consider unconstrained optimization problems of the form
\begin{equation}\label{eq::optimprob}
	\min_{x\in\R^n} f(x),
\end{equation}
where $n\in\N$ is the dimension of the problem, and $f$ belongs to $\F_n$: the set of real-valued lower bounded twice-continuously differentiable functions on $\R^n$ (with inner product $\langle\cdot,\cdot\rangle$ and norm $\norm{\cdot}$). Gradient and Hessian matrix of $f$ are denoted by $\gf$ and $\Hf$ respectively.

{We consider L2O models that are applicable in any dimension} (see Principle~\ref{princ::main} below), like standard optimization algorithms. Therefore in the sequel, the dimension $n$ is arbitrary and need not be the same for all the problems the algorithms are applied to. Nevertheless, for the sake of simplicity, the following discusses a fixed $\R^n$.
We call \emph{problem}, a triplet $(f,x_0,S_{0})$, made of an objective function $f\in\F_n$, an initialization $x_0\in\R^n$ and a collection of vectors and matrices $S_0\in \mathfrak{S}_n$, called \emph{state} ($\mathfrak{S}_n$ is the set of all possible states, clarified hereafter).

We formulate L2O algorithms in a generic form described in Algorithm~\ref{algo::GenL2O} that takes as input a problem $(f,x_0,S_0)$ and performs $K\in\N$ iterations before returning $x_K\in\R^n$.
Algorithm~\ref{algo::GenL2O} can also be mathematically represented by an operator $\A\colon\F_n\times\R^n\times\mathfrak{S}_n\times \N\to \R^n$ such that the $K$-th iteration of the algorithm reads
\[
x_{K} = \A\left(f,x_0,S_{0},K\right).
\]
Algorithm~\ref{algo::GenL2O} is fully characterized by what we call an \emph{oracle} $\C$, a \emph{model} $\M_\theta$,  an \emph{update} function $\U$ and a \emph{storage} function $\S$.
At any iteration $k\in\N$, the oracle $\C$ collects the information the algorithm has access to about $f$ at the current point $x_k$ and the state $S_k$ and constructs an \emph{input} $I_k\in \R^{n\times \nf}$, where $\nf\in\N$.
The input $I_k$ is then fed to a (machine learning) model represented by a parametric function $\M_\theta\colon \R^{n\times \nf}\to \R^{n\times m}$, where $\theta\in\R^p$ ($p\in\N$) is its parameter (in vector form), and $m\in\N$. The model outputs a \emph{prediction}, \ie $y_k = \M_\theta(I_k)$, which is used by the update $\U\colon \R^{n\times\nf}\times\R^{n\times m}\times \R^{n_h}$ to improve the current point: $x_{k+1} = x_k+ \U(I_k,y_k,\Gamma)$, where $\Gamma\in\R^{n_h}$ ($n_h\in \N$) are the hyper-parameters chosen by the user (a few scalars).
The storage $\S$ then collects, in $S_{k+1}$, the information from the $k$-th iterate that will be used at the next iteration.
This abstract formalism is generic enough to encompass at the same time L2O and several classical algorithms. Moreover, this systematic structuring allows for the formulation and analysis of key principles for LOA in Section~\ref{sec::principles}. We now illustrate this on an example.

\begin{center}
%\begin{minipage}{\linewidth}
\newlength{\commentWidth}
\setlength{\commentWidth}{2.5cm}
\newcommand{\atcp}[1]{\tcp*[r]{\makebox[\commentWidth]{\small #1\hfill}}}
\begin{algorithm}
	\DontPrintSemicolon
	\SetKwFor{For}{for}{:}{}
	\SetKwInput{Input}{input}
	\SetKwInput{Given}{given}
	\Given{oracle $\C$, model $\M_\theta$, update $\U$, storage $\S$}
	\Input{problem ($f$, $x_0$, $S_0$), number of iterations $K$, hyper-parameter $\Gamma$}
	\setstretch{1.2}
	\For{$ k = 0 $ to $K-1$}{
		$ I_k \gets  \C(f,x_k,S_k)$ \atcp{Construct input}
		$ y_k \gets  \M_\theta(I_k)$ \atcp{Model prediction} 
		$ x_{k+1} \gets x_k+ \U(I_k,y_k,\Gamma)$ \atcp{Update step} 
    $ S_{k+1} \gets \S(S,x_k,I_k,y_k)$ 		\setlength{\commentWidth}{6cm}
\atcp{Store relevant variables in state}
	}
	\Return{$x_K$}
	\caption{Generic LOA\label{algo::GenL2O}}
\end{algorithm}
%\end{minipage}
\end{center}

\paragraph{Example} 
Throughout what follows we use the heavy-ball (HB) method~\citep{polyak1964some} as running example to illustrate the concepts we introduce. An iteration $k\in\N$ of HB reads:
\begin{equation}\label{eq::HB}
	x_{k+1} = x_k + \alpha d_{k} - \gamma\gf(x_k),
\end{equation}
where $d_{k}=x_k-x_{k-1}$, $\alpha\in [0,1)$ is called the momentum parameter and $\gamma>0$ is the step-size. Notice that for $k=0$, the algorithm requires not only $x_0$ but also $x_{-1}\in\R^n$. This is the reason for introducing a state in Algorithm~\ref{algo::GenL2O}, in this case we would have $S_0=\{x_{-1}\}$. Any iteration $k$ of HB reads as follows:
	the operator $\C$ takes $(f,x_k,S_k)$, where $S_k = \{x_{k-1}\}$, and concatenates $\nabla f(x_k)$ and $d_{k}$ as $I_k=(d_{k},\gf (x_k))\in\R^{n\times \nf}$, with $\nf=2$.
	There is no learning phase hence no model $\M_\theta$ (by convention we say that $m=0$ and $y_k=0$).
	The update function $\U$ has hyper-parameter $\Gamma = (\alpha,\gamma)$--- so $n_h=2$ ---and uses $I_k$ to compute $\U(I_k,0,\Gamma) = \alpha{d_{k}} -\gamma \gf(x_k)$, which yields the update \eqref{eq::HB}. Finally, the storage $\S$ stores $x_k$ in $S_{k+1}$, as it will be required to compute $d_{k+1}$.

\subsection{{LOA is L2O with Specific Generalization}\label{sec::stagesL2O}}
{We detail the stages of building L2O models and the specific goal of a subfield of L2O: our LOA approach.}

{
\emph{Training}: The parameter $\theta$ of an L2O model $\M_\theta$ is set by minimizing a \emph{loss function} measuring how well \eqref{eq::optimprob} is solved on a \emph{training set} of problems $(f,x_0,S_0)$ (see \eqref{eq::lossfunction}). 
Training is crucial to find a ``good'' $\theta$ but computationally expensive and hard in L2O \citep{metz2019understanding}. Our goal is to study cases where the model is trained on a fixed training set, and then used on other functions $f\in\F_n$ \emph{without retraining}.

\emph{Test phase}: In machine learning, it standard to assume that the training set is sampled from an unknown underlying distribution of problems. Generalization, in the \emph{statistical sense},  refers to asserting how the trained model $\M_\theta$ performs on the whole distribution. This is  estimated by computing the performance on a \emph{test set} sampled independently from the same distribution.
}

{
\emph{Generalization}: One may wonder how the model $\M_\theta$ performs on problems not sampled from the aforementioned distribution. This is called out of distribution generalization and is unachievable in full generality \citep{wolpert1996lack}.
Instead, we note that hand-crafted optimization algorithms possess a different type of generalization property. We can sometimes use them for functions $f$ (and initializations) they were not designed for, only by tuning the hyper-parameter $\Gamma$. For example, HB was originally designed for locally $C^2$ functions~\citep{polyak1964some}, but works on larger classes of convex functions \citep{ghadimi2015global} and even performs well on non-convex ones \citep{zavriev1993heavy}. It also does not require a specific initialization and convergence rates are uniform in the dimension $n$ \citep{polyak1987introduction,bertsekas1997nonlinear}.
\emph{LOA differs from the rest of L2O} by identifying specific generalization properties that most hand-crafted optimization algorithms have, and designing L2O algorithms that features them.
We formulate these properties as a list of \emph{principles}.
}

%%%%%%%%%%%%%%%%%%%%%%%%%%%%%%%%%%%%%%%%%%%%%%%%%%%%%%%%%%%%%%%%%%%%%%%%%%%%%%%%%%%%%
\section{{THE PRINCIPLES OF LOA}}\label{sec::principles}

\begin{figure*}[t]
	\centering
	\begin{minipage}{.18\linewidth}
		\centering
		\includegraphics[width=\linewidth]{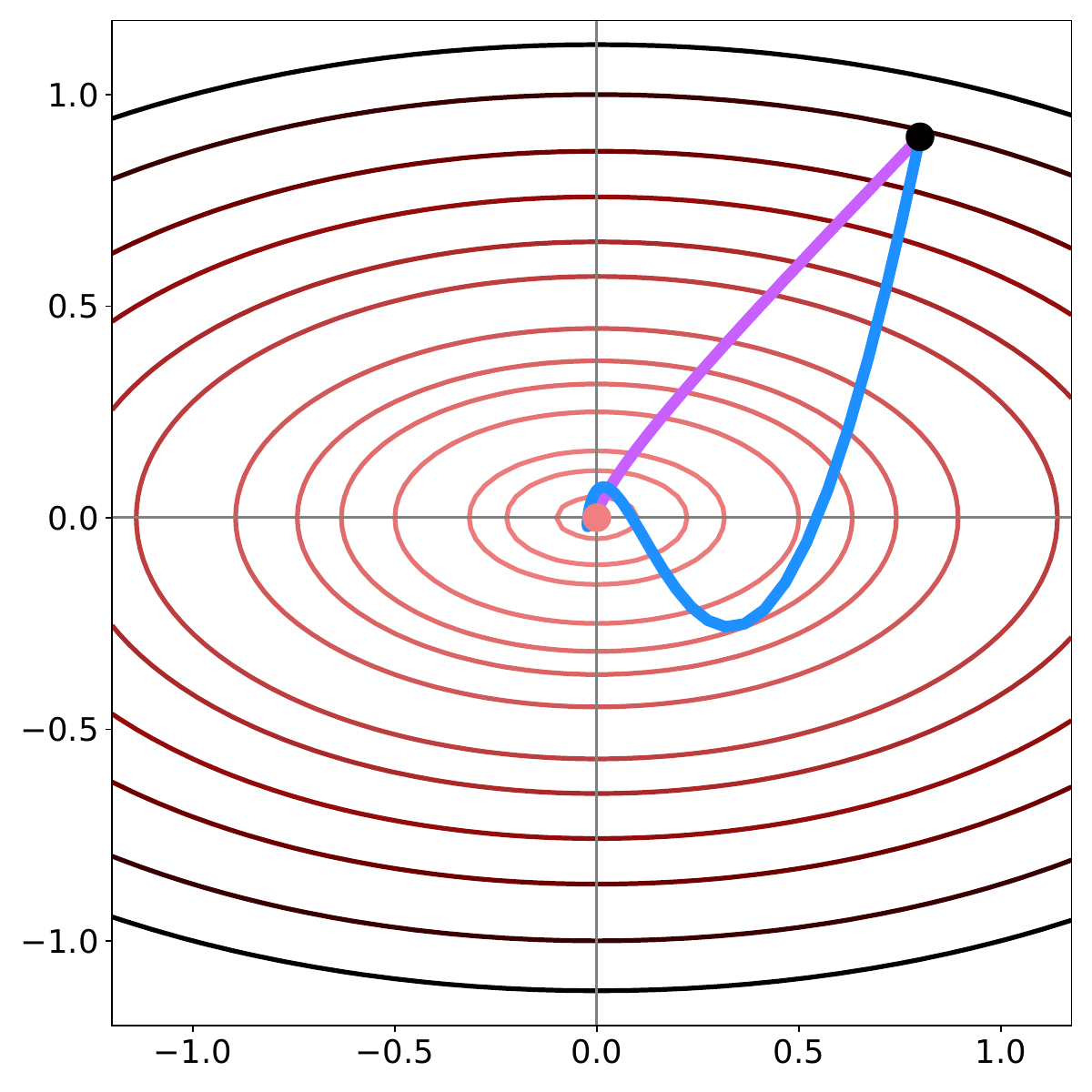}
	\end{minipage}
	\begin{minipage}{.18\linewidth}
		\centering
		\includegraphics[width=\linewidth]{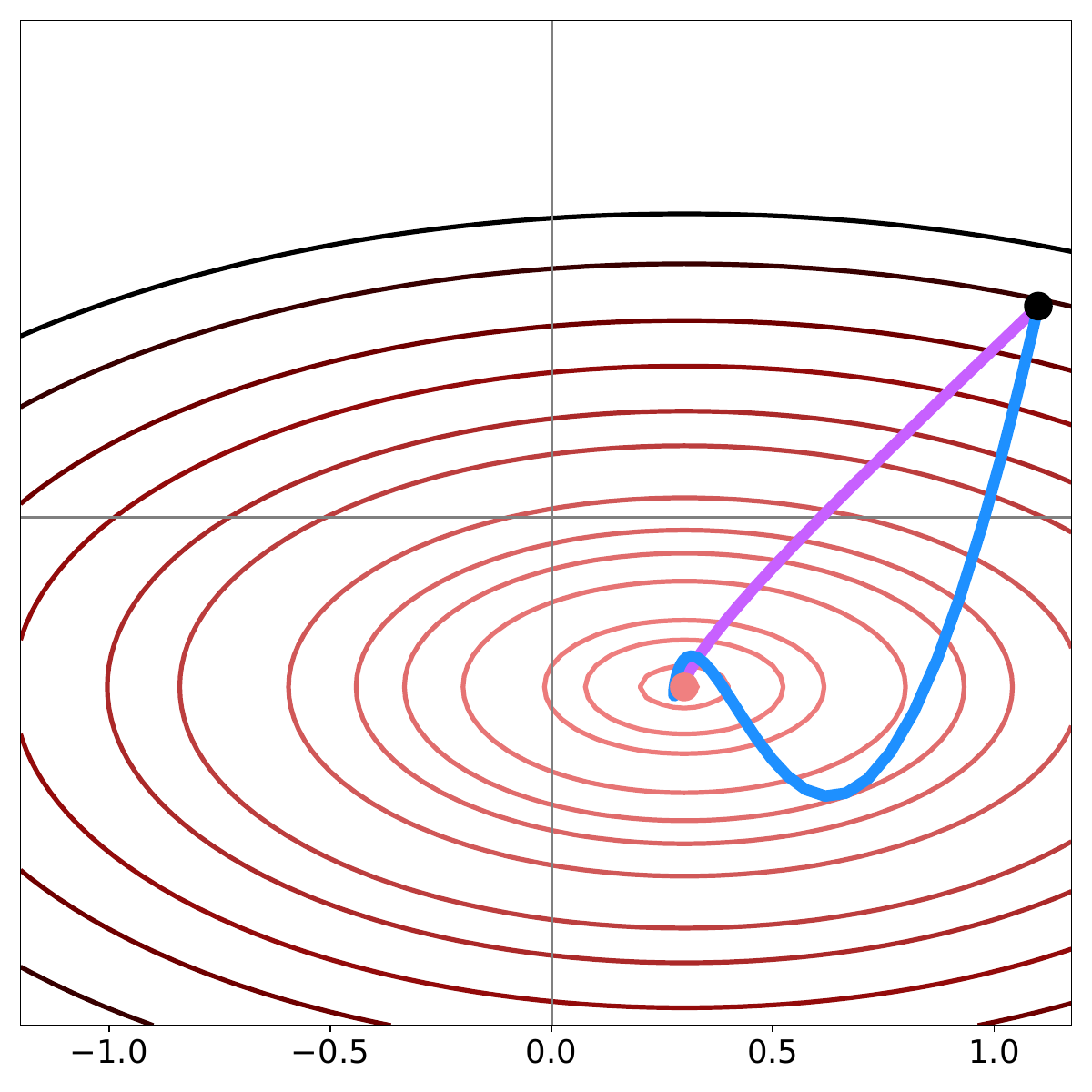}
	\end{minipage}
	\begin{minipage}{.18\linewidth}
		\centering
		\includegraphics[width=\linewidth]{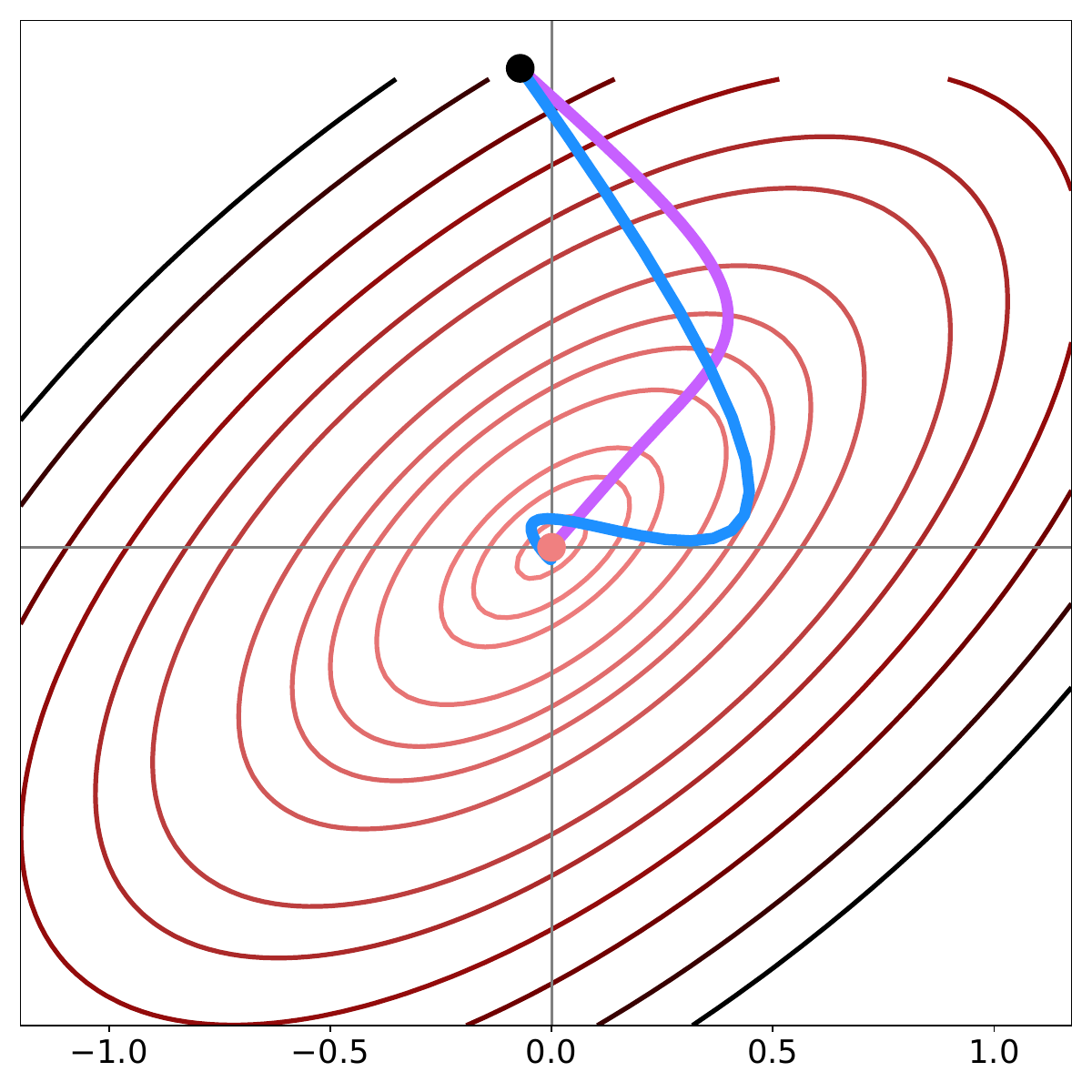}
	\end{minipage}
	\\
	\begin{minipage}{.55\linewidth}
		\centering
		\includegraphics[width=\linewidth]{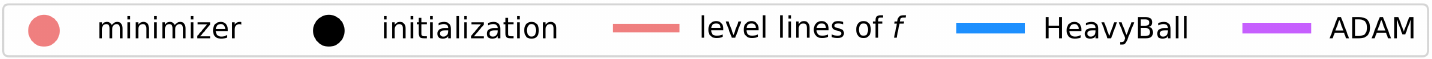}
	\end{minipage}
	\caption{Illustration of equivariances on the landscape of a 2D function. Left: no transformation, middle: translation, right: rotation. Transforming $f$ and $x_0$ does the same to the iterates of HB. ADAM \citep{kingma2014adam} is translation equivariant but not rotation equivariant.\label{fig::transformations}}
\end{figure*}

The cornerstone principle of LOA, is that optimization algorithms should be applicable in any dimension $n$. Since we \emph{do not retrain the model $\M_\theta$}, this implies the following.
\begin{principle}\label{princ::main}
	Algorithm~\ref{algo::GenL2O} should be independent of the dimension $n$, \ie the size $p$ of $\theta$ and $\nf$ should be independent of $n$ and as small as possible.
\end{principle}

{This principle makes LOA very different from the rest of L2O: the algorithm may be used on problems where $p$ is much smaller than $n$. LOA is thus in the \emph{under-parameterized} learning regime, so the training phase cannot be used to memorize many examples (unlike in the over-parameterized case \citep{zhang2021understanding}). We propose to cope with this via a careful algorithmic design revolving around three ideas.
}

\paragraph{Enhancement:} We use learning to enhance existing hand-crafted algorithms, preserving their theoretically-grounded parts. We only replace parts based on heuristics with learning, eventually reducing the size of $\theta$.

\paragraph{Adaption:} LOA must adapt \emph{on the fly} (along the iterates) to each problem by storing information in the state $S_k$. This can be achieved by recurrent neural networks, \textit{e.g.}, LSTMs \citep{andrychowicz2016learning,hochreiter1997long} or by enhancing adaptive algorithms like ADAM~\citep{kingma2014adam} or quasi-Newton methods, as we do in Section~\ref{sec::L2OQN}.

\paragraph{Hard-coded generalization:} {
	We show that most hand-crafted algorithms share generalization properties expressed through equivariance to key geometric transformations. This is one of the main contributions of our work, to which the rest of this section is dedicated to.
}
 Fix $f\in\F_n$, $x_0\in\R^n$ and $S_0$ and consider an invertible mapping $\T\colon\R^n\to\R^n$. Since $\T$ is invertible, observe that for all $x\in\R^n$:
\begin{equation}\label{eq::transformf}
  f(x) = f(\T^{-1}(\T(x))) = {f\circ\T^{-1}}({\T(x)}) = \hat{f}(\hat{x}),
\end{equation}
where we define $\hat{f}$ as $f\circ\T^{-1}$ and $\hat{x}=\T(x)$, for all $x\in\R^n$.
 Therefore \eqref{eq::transformf} expresses in particular that $(f,x_0,S_0)$ and $(\hat{f},\hat{x}_0,\hat{S}_0)$ are \emph{two different representations of the same problem}, where $\hat{S}_0$ is one-to-one with $S_0$ such that for every vector\footnote{The case of matrices contained in $S_0$ is more complex and discussed in Appendix~\ref{app:proofPrinc}.} $\hat{v}\in\hat{S}_0$ there exists $v\in S_0$ such that \ $\hat{v}=\T(v)$.
 {One would naturally want optimization algorithms (including Algorithm~\ref{algo::GenL2O}) to perform the same regardless the representation}, \ie
\begin{equation}\label{eq::invarianceoff}
	f(\A(f,x_0,S_0,K)) = \hat{f}(\A(\hat{f},\hat{x}_0,\hat{S}_{0},K)),\ \forall K\in\N.
\end{equation}
According to \eqref{eq::transformf}, a sufficient condition is that $\A(\hat{f},\hat{x}_0,\hat{S}_{0},K) =  \T\left(\A(f,x_0,S_0,K)\right)$ holds for all $(f,x_0, S_0)$ and $K$. When this is true, we say that the algorithm $\A$ is \emph{equivariant} to $\T$.

\paragraph{Link with Generalization}
Machine learning exploits similarity in data; in L2O, this means similarity between landscapes of objective functions. This is in line with our approach since \eqref{eq::transformf} expresses a specific form of similarity: that w.r.t.\ invertible transformations $\T$.
Although equivariances can sometimes be learned via data augmentation \citep{flinth2023optimization}, we argue that in the under-parameterized setting (where LOA belongs), enforcing those by design avoids wasting parts of the small parameter $\theta$ relearning them.

\paragraph{Trade-off}
While one would naturally want equivariance with respect to any invertible $\T$, this imposes severe restrictions on the design of Algorithm~\ref{algo::GenL2O}.
We therefore analyze equivariance only with respect to key transformations, summarized in Table~\ref{tab::equiv}.
Actually, even most hand-crafted algorithms do not achieve all the equivariances considered in Table~\ref{tab::equiv}.
A notable exception is Newton's method, which is unsuitable for large-scale optimization.
There is thus \emph{always a tradeoff to find}.
In fact, for hand-crafted algorithms, \citet{fletcher2000Newton} hypothesizes that the success of BFGS comes from the tradeoff it achieves between its computational cost and the equivariances it possesses.

We now discuss how to enforce specific equivariances  by design in Algorithm~\ref{algo::GenL2O}.

\begin{table*}[t]
	\caption{Summary of invariance and equivariance properties of several algorithms (proved in Appendix~\ref{app::priors})
	\label{tab::equiv}}
	\centering
	\begin{tabular}{lccccc}
		\toprule
		&\makecell{ Translation \\ (Principle~\ref{princ::translation})}&\makecell{ Permutation  \\ (Principle~\ref{princ::permut})} & \makecell{ Orthogonal \\ transform.}
		&
		\makecell{ Geom. scaling \\ (Principle~\ref{princ::geomscale})}
		&
		\makecell{ Func. scaling \\ (Principle~\ref{princ::scale})}
		\\
		\midrule
		Gradient desc. & \cmark &\cmark &\cmark & $\gmark$ &$\gmark$ \\
		Heavy-Ball & \cmark &\cmark &\cmark & $\gmark$ & $\gmark$ \\
		Newton Meth. & \cmark &\cmark &\cmark &\cmark &\cmark \\
		BFGS & \cmark &\cmark &\cmark &\cmark &\cmark \\
		ADAM  & \cmark &\cmark &\xmark &$\gmark$ &\cmark \\
		Algorithm~\ref{algo::L2OQN} & \cmark &\cmark &\xmark& \cmark & \cmark\\
		\bottomrule
	\end{tabular}
\end{table*}

\subsection{Translations\label{sec::translation}}
Let $v\in\R^n$, the translation $\T_v$ is defined for all $x\in\R^n$ by $\T_v(x)=x+v$. Then $\hat{f} = f(\cdot-v)$ and $\hat{x}_0=x_0+v$. Most algorithms are translation equivariant (see Table~\ref{tab::equiv} and Figure~\ref{fig::transformations}), which leads to the following principle.
\begin{principle}
	\label{princ::translation}
	Algorithm~\ref{algo::GenL2O} should be translation equivariant, \ie $\T_v$-equivariant for all  $v\in \R^n$. 
\end{principle}

\paragraph{Strategy}
Remark that since $\hat{x}_0 = x_0+v$, then $\hat{x}_K = x_K +v \iff \sum_{k=0}^{K-1} \hat{d}_k = \sum_{k=0}^{K-1} d_k$. So we want to ensure that $\forall k\in\N$, $d_k = \hat{d}_k$.
{The quantities $\gf(x_k)$ and $d_k=x_k-x_{k-1}$ used in HB (Section~\ref{sec::mathformalism}) are translation \emph{invariant} since for example $\nabla \hat{f}(\hat{x}) = \gf(x)$, which makes HB translation equivariant (see the proof in Appendix~\ref{app::priors})}.
This shows that it is often possible to make $\C$ translation invariant (\ie $\C(f,x,S) = C(\hat{f},\hat{x},\hat{S})$) which is then enough to make the algorithm translation equivariant, since by direct induction $\hat{y}_k = \M_\theta(\hat{I}_k) = \M_\theta(I_k) =y_k$ and thus $\U(\hat{I}_k,\hat{y}_k,\Gamma) = \U(I_k,y_k,\Gamma)$.

\paragraph{Practical consequences} An easy way to ensure translation invariance of $\C$ is to never output ``absolute'' quantities such as $x_k$ but always differences like $d_{k}=x_k-x_{k-1}$, exactly like HB.
We follow this strategy as it does not put any restriction on $\M_\theta$ nor $\U$.

\subsection{Permutations}

In optimization, the ordering of coordinates is often arbitrary. For example, the functions $(x,y)\mapsto x^2+2y^2$ and $(x,y)\mapsto 2x^2+y^2$ represent the same problem with permuted coordinates. This transformation is represented by a  \emph{permutation matrix} $P$ which contains only zeros except one element equal to $1$ per line, and such that $P^TP=\mathbb{I}_n$ where $\mathbb{I}_n$ denotes the identity matrix of size $n$.
Fix such $P$ and consider the corresponding transformation (with $\hat{f}$ and $\hat{x}$ redefined accordingly). As shown in Table~\ref{tab::equiv}, almost all popular algorithms are permutation equivariant.\footnote{A notable exception regards algorithms constructing block-diagonal matrices like K-FAC \citep{martens2015optimizing}, these blocks depend on the ordering of coordinates.}
\begin{principle}\label{princ::permut}
    Algorithm~\ref{algo::GenL2O} should be equivariant to all permutation matrices $P$.
\end{principle}

\paragraph{Strategy} Remark that this time to get $\hat{x}_k = P x_k$ for all $k\in\N$, we need $\hat{d}_k = Pd_k$ \textit{i.e.} we need equivariance. Taking again the example of HB, we show in Appendix~\ref{app::priors} that $\nabla \hat{f}(\hat{x})=(P^{-1})^T\gf(x) = P\gf(x)$, and we should expect $\hat{d}_k = Pd_k$ (since this is what we want to obtain). Therefore we design $\C$ to be equivariant to permutations, as this is what makes HB permutation equivariant. In L2O, we would get $\M_\theta(\hat{I}_k) = \M_\theta(PI_k)$, so we make $\M_\theta$ equivariant as well ($\M_\theta(PI_k) = P\M_\theta(I_k)$) so that $\U$ gets only permuted quantities, and finally design $\U$ to preserve equivariance.

\paragraph{Practical consequences}
Permutation equivariance strongly restricts the choices for $\M_\theta$ as \citet{zaheer2017deep} showed that the only fully-connected (FC) layer operating along the dimension $n$ and preserving equivariance is very basic with only two learnable scalars and no bias.
To be usable in any dimension $n$, many L2O models rely on per-coordinate predictions \citep{andrychowicz2016learning}, making them permutation equivariant but completely neglecting interactions between coordinates. In Section~\ref{sec::L2OQN} we propose a model that is permutation equivariant while allowing such interactions.

\paragraph{The orthogonal group} Permutations matrices form a subset of the set of orthogonal matrices (square matrices with $P^TP=\mathbb{I}_n$). They correspond to  so-called \emph{rotations} and \emph{reflections}. Several algorithms are equivariant to all orthogonal transformations, which makes this property appealing. Yet, we prove in Appendix~\ref{app:orthoL2O} that is hardly compatible with L2O since it does not hold for any FC layer with ReLU activation function. Fortunately, in many setting the canonical coordinate system has a clear meaning (e.g. each coordinate represents a weight of a neural network to train). So this requirement is \emph{desirable but not crucial}, as the performance of ADAM in deep learning attests.

\begin{figure*}[t]
	\centering
	\begin{minipage}{.7\linewidth}
		\centering
		\includegraphics[width=\linewidth]{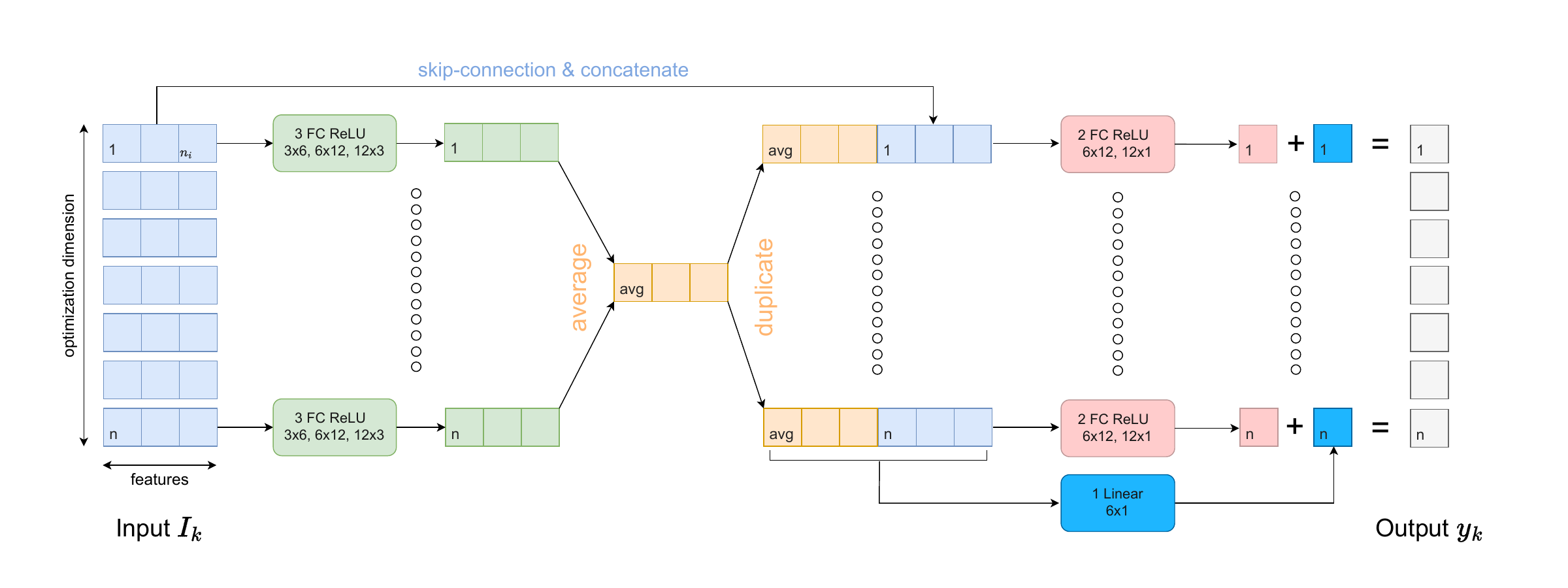}
	\end{minipage}
	{\begin{tabular}{ccc}
			Layer & Size &
			\\
			\midrule
			FC & 3 x 6
			\\
			FC & 6 x 12
			\\
			FC & 12 x 3
			\\
			avg. \&
			duplic. & ---
			\\
			FC & 3 x 12
			\\
			\makecell{FC \&\\
				linear} & \makecell{12 x 1 \&	\\ 6 x 1}
			\\
			\bottomrule
		\end{tabular}
	}
	\caption{Architecture of our L2O model that preserves the principles of Section~\ref{sec::principles}. {The same layers are applied to each coordinate, and interactions between coordinates are allowed via an intermediate averaging and duplication.} \label{fig::model}}
\end{figure*}

\subsection{Rescaling}

Let $\lambda>0$ and consider the geometric rescaling $\T_\lambda(x)=\lambda x$, and redefine $\hat{f}$ and $\hat{x}$ accordingly.
\begin{principle}\label{princ::geomscale} Algorithm~\ref{algo::GenL2O} should be equivariant to geometric rescaling.
\end{principle}
This principle is also optional as it is usually not satisfied by first-order methods. Indeed, one can see that $\hat{f}(\hat{x}_k)=\frac{1}{\lambda}\gf(x_k)$ and we want $\hat{d}_k = \lambda d_k$. So for example one needs to tune $(\alpha,\gamma)$ in HB to recover equivariance (indicated by $\gmark$ in Table~\ref{tab::equiv}). In contrast, Newton's and QN methods are equivariant to geometric rescaling.
In the context of LOA, we construct an algorithm (in Section~\ref{sec::L2OQN}) where we make $\M_\theta$ scale equivariant and show that our update $\U$ can then make Principle~\ref{princ::geomscale} hold. This mildly restricts $\M_\theta$, as \eg $\mathrm{ReLU}$ is scale equivariant but the sigmoid is not.

We consider a final, different, principle. For $\lambda>0$, if \emph{the function} is rescaled: $\hat{f} = \lambda f$, then $\nabla \hat{f} = \lambda \gf$. This does not transform the initialization: $\hat{x}_0=x_0$ so we want \emph{invariance} of the algorithm (and equivariance of function values).
\begin{principle}\label{princ::scale} Algorithm~\ref{algo::GenL2O} should be such that $\A(\lambda f,x_0,\hat{S}_0,K) = \A(f,x_0,S_0,K)$, $\forall \lambda>0$.
\end{principle}
For similar reasons as geometric rescaling, this principle must be dealt with case-specifically, but is compatible with LOA as we show (see Theorem~\ref{thm::principles}).

{
\begin{remark}
ADAM is popular in deep learning (DL) because tuning its step-size is easier than for GD \citep{sivaprasad2020optimizer}, which comes from its invariance to function rescaling (Principle~\ref{princ::scale}). ADAM does not suffer from lacking Principle~\ref{princ::geomscale} in DL, thanks to scaled initialization strategies \citep{he2015delving}.
\end{remark}
}

%%%%%%%%%%%%%%%%%%%%%%%%%%%%%%%%%%%%%%%%%%%%%%%%%%%%%%%%%%%%%%%%%%%%%%%%%%%%%%%%%%%%%
\subsection{Comparison to Common Heuristics}\label{ref::connections}

{ The principles stated in Section~\ref{sec::principles} {provide a justification for heuristics} that people have used in the L2O literature. For example a large part of the existing work followed \citet{andrychowicz2016learning} and used coordinate-wise models, thus enabling Principle~\ref{princ::main}. Models also rely mostly on $\nabla f$, which has the translation invariance property, crucial for Principle~\ref{princ::translation} (see Section~\ref{sec::translation}). Notably, \citet{wichrowska2017learned} discusses the scaling issue (Principle~\ref{princ::scale}) and tries to mitigate it by decomposing inputs in magnitude and unit directions. Similarly, \citet{lv2017learning} used what they call ``random rescaling'' (Section 4.1 therein), which is a data-augmentation technique that can exactly be interpreted as an attempt to learn Principle~\ref{princ::geomscale}. Our work brings justification for these heuristics and provides design strategies to replace them.
}

%%%%%%%%%%%%%%%%%%%%%%%%%%%%%%%%%%%%%%%%%%%%%%%%%%%%%

\section{APPLICATION TO LEARNING QUASI-NEWTON ALGORITHMS}\label{sec::L2OQN}

{We illustrate our approach on the example of building a LOA, based on the BFGS method and prove it obeys the principles above.} BFGS is a quasi-Newton (QN) method, \ie one that iteratively builds an approximation of the computationally-expensive inverse Hessian matrix used in Newton's method. This is thus in line with the idea to adapt to each problem \emph{on the fly} (see Section~\ref{sec::principles}). While combining L2O and QN methods has been considered before  (see Section~\ref{ref::relatedwork}), our approach differs in several aspects starting with the following.

\subsection{A variational view on BFGS\label{sec::variational}}
Let $k\in\N$ be an iteration, we use the notation $\dg{k} = \gf(x_k)-\gf(x_{k-1})$ and recall that $d_k = x_k-x_{k-1}$.
 QN methods are based on the fact that for quadratic functions the \emph{secant equation} $d_k = \Hf(x_k)^{-1}\dg{k}$ holds. QN methods aim to iteratively build approximations $B_k$ to $\Hf(x_k)^{-1}$, with the constraint that $B_k$ must preserve the secant equation: $d_k = B_k\dg{k}$ and that $B_k$ is symmetric. From a variational perspective, \cite{greenstadt1970variations,goldfarb1970family} showed that BFGS aims to keep $B_k$ close to the previous approximation $B_{k-1}$ by taking $B_k$ as the solution of the following problem:
\begin{equation}\label{eq::BFGSprob}
	B_k = 
		\min_{\substack{ B\in\R^{n\times n}
				\\s.t.
				 B_{k}\dg{k} = d_{k}\text{ and }
 B_k -B_k^T = 0}} \norm{B-B_{k-1}}_W.
\end{equation}
Here $\norm{\cdot}_W$ denotes the Frobenius norm reweighted by some symmetric positive definite matrix $W$.
Denoting $y_k = W^{-1}\dg{k}$ and $r_k = d_k - B_{k-1}\dg{k}$, one can show that the solution of \eqref{eq::BFGSprob} is
\begin{multline}\label{eq::genBFGSupdate}
	B_k = B_{k-1} \\+ \frac{1}{\langle\dg{k}, y_k\rangle} \left[ r_{k} y_k^T + y_k r_{k}^T - \frac{\langle\dg{k}, r_{k}\rangle}{\langle\dg{k}, y_k\rangle} y_k y_k^T \right].
\end{multline}
BFGS is then based on the heuristic trick  (albeit elegant) that taking $W^{-1}$ to be the \emph{unknown} next approximation $B_k$, yields $y_k=d_k$ (due to the secant equation) and preserves positive-definiteness.
{
Instead of using L2O to directly predict the matrix $B_k$ as done in prior work, we use it precisely at this stage.
We use a model $\M_\theta$ that \emph{predicts directly} a different $y_k$ than that of BFGS. Remark that there is no need to predict the matrix $W$ since is only appears through the vector $y_k = W^{-1}\dg{k}$. This allows enhancing BFGS with L2O while preserving the coherence of the algorithm.
}

\subsection{Our Learned Algorithm \label{sec::ourL2O}}
We now specify each part of our LOA-BFGS method.

\paragraph{The oracle $\C$}
For each iteration $k\in\N$ of our algorithm, we use the state $S_k = \{x_{k-1}, \gf(x_{k-1}), B_{k-1}\}$. Our oracle $\C$ computes $\nabla f(x_k)$, $d_k$ and $\dg{k}$ as in BFGS but also new features $B_{k-1}\Delta g_k$ and $-\gamma B_{k-1}\nabla f(x_k)$, gathered as $I_k = \left(B_{k-1}\Delta g_k, d_k, -\gamma B_{k-1}\nabla f(x_k)\right)$. Note that all these features must be scale invariant since $B_{k-1}$ approximates $\Hf(x_k)^{-1}$.
\paragraph{The learned model $\M_\theta$}
Our model only takes three features as input ($\nf=3$) but creates additional ones by applying a block of coordinate-wise FC layers, without bias, then averaging the result and concatenating it with $I_k$. This allows feature augmentation and makes each coordinate interacting with all the others. The result is then fed to another coordinate-wise block of FC layers (no bias) added to a linear layer yielding the output $y_k\in \R^n$. The architecture is detailed in Appendix~\ref{app:model} and summarized in Figure~\ref{fig::model}.
The linear layer acts as a skip-connection and will allow us to introduce a trick that stabilizes the training later in Section~\ref{sec::practical}.
Note that the cost of each operation inside $\M_\theta$ is proportional to $n$ which is cheaper than the matrix-vector products of cost $O(n^2)$ that $\C$ and $\U$ (and vanilla BFGS) involve.

\paragraph{The update $\U$ and storage $\S$} Our update is that of BFGS:\footnote{To fit in our mathematical formalism, BFGS and our algorithm would need $\U$ to also take $S_k$ as input. Since this would not affect any of the discussion above, we ignored this for the sake of simplicity.} the  approximation $B_k$ is updated using \eqref{eq::genBFGSupdate} with a different $y_k$, and $x_{k+1} = x_k - \gamma B_k\gf(x_k)$, where $\gamma>0$ is a step-size, usually close to $1$ (or chosen by line-search). Like in vanilla BFGS, $\S$ finally stores $\{x_{k}, \gf(x_{k}), B_{k}\}$ for the next iterate.

\paragraph{Initialization of $B_{-1}$} QN methods require an initial approximated inverse Hessian matrix $B_{-1}$. While the simplest choice is  $\mathbb{I}_n$, several works \citep{NIPS2012_e034fb6b,BFO19} observed notable improvement by initializing with the Barzilai-Borwein (BB) step-size \citep{barzilai1988two}:
$
  \gamma_{\mathrm{BB}}^{(0)} \stackrel{\mathrm{def}}{=} \frac{\langle \dg{0},d_0\rangle}{\norm{\dg{0}}^2}.
$
We follow this approach and take $B_{-1} = 0.8\gamma_{\mathrm{BB}}^{(0)}\mathbb{I}_n$. With this choice, $B_{-1}$ agrees with Principles~\ref{princ::geomscale} and~\ref{princ::scale} \emph{without additional knowledge on $f$} (see Appendix~\ref{app:proofPrinc}) and so do Algorithm~\ref{algo::L2OQN} and BFGS.

The whole process is summarized in Algorithm~\ref{algo::L2OQN}. Note that replacing the model $\M_\theta$ with $y_k=d_k$ makes Algorithm~\ref{algo::L2OQN} coincide exactly with BFGS. This will prove useful for training $\M_\theta$ (see Section~\ref{sec::practical}).

\begin{algorithm}
	\DontPrintSemicolon
	\SetKwFor{For}{for}{:}{} %Python-like for
	\SetKwProg{Prog}{Compute}{:}{}
	\SetKwInput{Input}{input}
	\SetKwInput{Given}{given}
	\SetKwInput{Initialize}{initialize}
	\Given{model $\M_\theta$ defined in Figure~\ref{fig::model} and Sections~\ref{sec::ourL2O} and~\ref{app:model}}
	\Input{function to minimize $f$, initialization $x_0$}
	\Input{initial state $S_{0} = \{x_{-1}, \gf(x_{-1}), B_{-1}\}$, \ (with $B_{-1}=0.8\gamma^{(0)}_{\mathrm{BB}}\mathbb{I}_n$)}
	\Input{number of iterations $K$, step-size $\gamma$ (default value $\gamma=1$).}
	\setstretch{1.2}
	\For{$ k = 0 $ to $K-1$}{
		\Prog{$\C(f,x_k,S_k)$}{
			$\dg{k}=\gf(x_k)-\gf(x_{k-1})$\;
			$d_{k} =x_k-x_{k-1}$\;
			$ I_k \gets \left(B_{k-1}\Delta g_k, d_k, -\gamma B_{k-1}\nabla f(x_k)\right)$ \;
			\Return{$I_k$}\;
		}
		\BlankLine
		\Prog{$\M_\theta(I_k)$}{
			\Return{$y_k$}\;
		}
		\BlankLine
		\Prog{Update step $\U$}{
			$r_{k} = d_{k} - B_{k-1}\dg{k}$\;
			$B_k = B_{k-1} + \frac{1}{\langle\dg{k}, y_k\rangle} \left[ r_{k} y_k^T + y_k r_{k}^T - \frac{\langle\dg{k}, r_{k}\rangle}{\langle\dg{k}, y_k\rangle} y_k y_k^T \right]$ \;
			\BlankLine
			$ x_{k+1} \gets  x_k -\gamma B_k \gf(x_k) $\;
		}
		\BlankLine
		\Prog{Storage $\S$}{
			$S_{k+1} = \left\{
			x_{k}, \gf(x_{k}), B_{k}
			\right\}$ \;
		}
		\BlankLine
	}
	\Return{$x_K$}
	\caption{Learning enhanced QN Algorithm\label{algo::L2OQN}}
\end{algorithm}

\begin{figure*}[t]
	\centering
	\setlength\tabcolsep{2pt}
		\setlength\tabcolsep{1pt}
		\renewcommand{\arraystretch}{0.4}
		\begin{tabular}{ccccccc}
			\raisebox{1.3cm}{\rotatebox{90}{$ \frac{f(x_k)-f^\star}{f(x_0)-f^\star}$}}
			&
			\includegraphics[height=.22\linewidth]{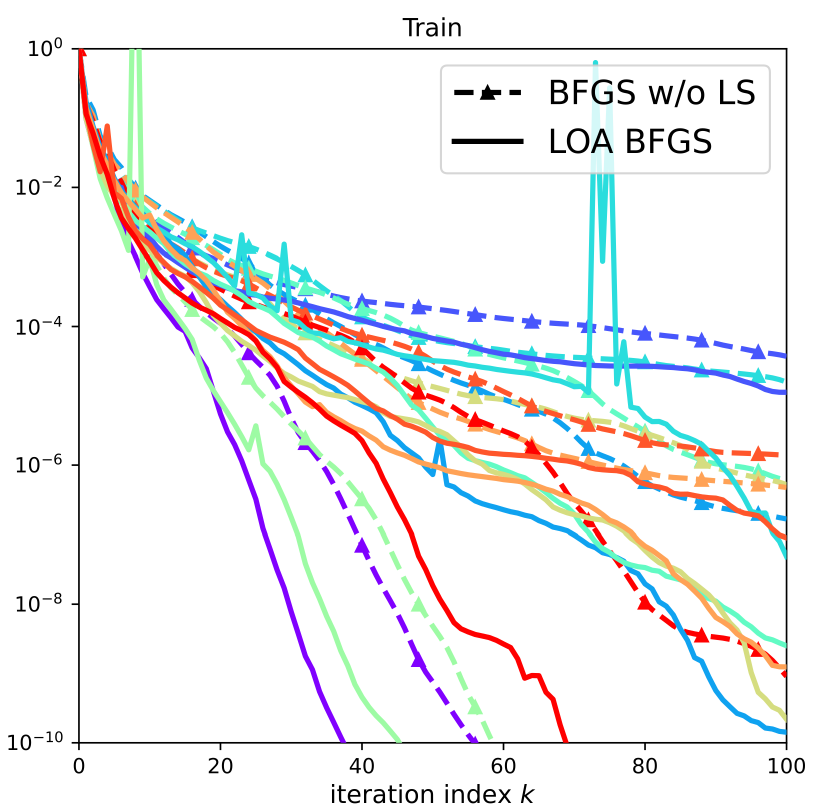}
			& \includegraphics[height=.22\linewidth]{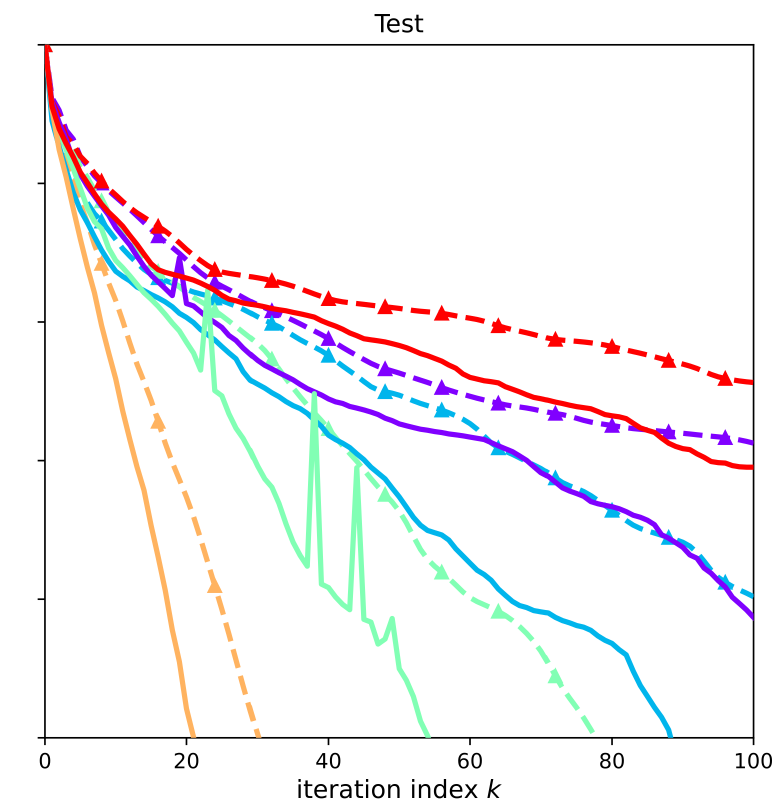}
			&
			\includegraphics[height=.22\linewidth]{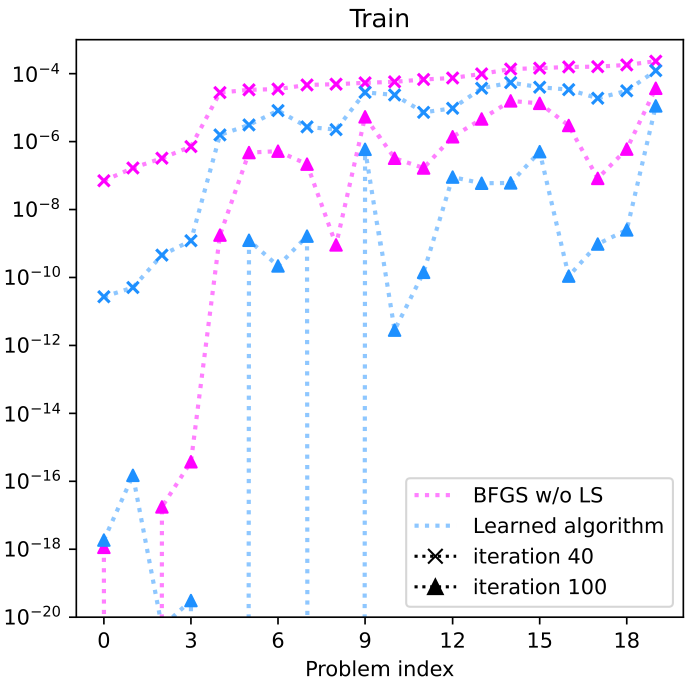}
			&
			\includegraphics[height=.22\linewidth]{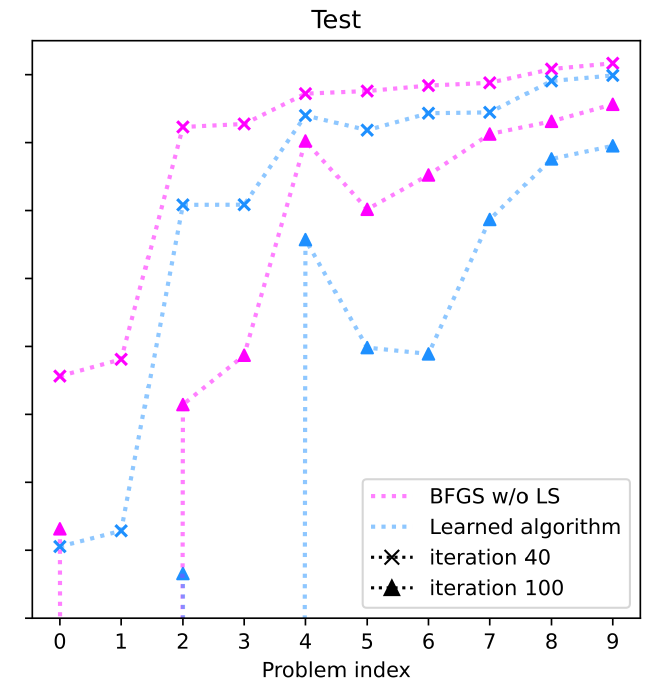}
		\end{tabular}
		\caption{Performance of our learned BFGS method on quadratic functions in dimension $n=100$ (training and test). Left plots show relative sub-optimality gap against iterations, each color represents a different problem. Right plots: relative sub-optimality gap for each problem at several stages (lower is better). \label{fig::quad100}
		}
	\end{figure*}
%%%%%%%%%%%%%%%%%%%%%%%

\subsection{Theoretical Analysis\label{sec::theoretical}}

Based on our strategy from Section~\ref{sec::principles}, our LOA follows the principles therein, as proved in Appendix~\ref{app:proofPrinc}.
\begin{theorem}\label{thm::principles}
  With the choice of $B_{-1}$ above, Algorithm~\ref{algo::L2OQN} follows Principles~\ref{princ::main}-\ref{princ::translation}-\ref{princ::permut}-\ref{princ::geomscale}-\ref{princ::scale}.
\end{theorem}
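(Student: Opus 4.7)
My plan is to handle Principle~\ref{princ::main} separately (it is immediate from the construction) and then prove the four equivariance/invariance properties by induction on the iteration counter $k$, using a common template. Principle~\ref{princ::main} holds because the oracle builds $I_k \in \R^{n\times 3}$ (so $\nf = 3$ is independent of $n$), the model $\M_\theta$ consists of coordinate-wise fully connected layers plus an average-and-duplicate step and a linear skip connection, none of which have a size that depends on $n$, and the update $\U$ is written entirely in terms of operations (inner products, outer products, matrix-vector products) that make sense in any dimension.

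For each of Principles~\ref{princ::translation},~\ref{princ::permut},~\ref{princ::geomscale},~\ref{princ::scale} I would proceed by strong induction, maintaining the correct transformation law not only for $x_k$ but also for the stored quantities $\gf(x_{k-1})$ and $B_{k-1}$. The transformation laws to track are: translation $\T_v$ leaves gradients and $B_{k-1}$ invariant and shifts $x_k \mapsto x_k + v$; permutation by $P$ acts as $x_k \mapsto P x_k$, $\gf \mapsto P\gf$, $B_{k-1} \mapsto P B_{k-1} P^T$; geometric rescaling by $\lambda$ gives $x_k \mapsto \lambda x_k$, $\gf \mapsto \lambda^{-1}\gf$, $B_{k-1} \mapsto \lambda^2 B_{k-1}$; and function rescaling by $\lambda$ gives $x_k \mapsto x_k$, $\gf \mapsto \lambda \gf$, $B_{k-1} \mapsto \lambda^{-1} B_{k-1}$. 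The inductive step then splits into four checks that all follow the same pattern: (i) $d_k$ and $\dg{k}$ transform correctly; (ii) combined with the induction hypothesis on $B_{k-1}$, each of the three columns of $I_k$ transforms uniformly (invariant for translation and for function scaling; by $P$ for permutations; by $\lambda$ for geometric rescaling); (iii) the architecture of $\M_\theta$ (coordinate-wise FCs with no bias, ReLU activations, average/duplicate, linear skip) forces $\hat y_k$ to inherit the same column-wise transformation as $\hat I_k$; (iv) the residual $r_k = d_k - B_{k-1}\dg{k}$, the scalars $\langle \dg{k},y_k\rangle$ and $\langle \dg{k},r_k\rangle$, and the outer products in~\eqref{eq::genBFGSupdate} combine so that $B_k$ transforms as required, after which $x_{k+1} = x_k - \gamma B_k \gf(x_k)$ transforms as claimed.

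The base case requires checking that $B_{-1} = 0.8\,\gamma^{(0)}_{\mathrm{BB}} \mathbb{I}_n$ already satisfies each transformation law. Writing $\gamma^{(0)}_{\mathrm{BB}} = \langle \dg{0}, d_0\rangle / \norm{\dg{0}}^2$, a direct substitution shows that this ratio is translation-invariant, permutation-invariant, scales as $\lambda^2$ under geometric rescaling, and as $\lambda^{-1}$ under function rescaling; combined with $P \mathbb{I}_n P^T = \mathbb{I}_n$, this gives exactly the transformation laws needed for $B_{-1}$ to match the induction hypothesis at $k = 0$.

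The main obstacle I expect is the bookkeeping in step (iv) for geometric rescaling and function rescaling: one must verify that the non-trivial scaling factors appearing on $r_k$, $y_k$, and the two inner products in the denominator and numerator of~\eqref{eq::genBFGSupdate} combine to exactly cancel so that $B_k$ scales by the single factor $\lambda^2$ or $\lambda^{-1}$ respectively, and that this factor then interacts correctly with the scaling of $\gf(x_k)$ in the final update $-\gamma B_k \gf(x_k)$. A secondary subtlety is justifying step (iii) for permutations and for scalings simultaneously: permutation equivariance of $\M_\theta$ follows from applying the same coordinate-wise network at each coordinate (the averaging across coordinates is itself a symmetric operation), and scale equivariance follows because every learned weight enters linearly, with no bias terms, and ReLU commutes with multiplication by positive scalars; I would isolate these two facts as short lemmas about the architecture of Figure~\ref{fig::model} before plugging them into the induction.
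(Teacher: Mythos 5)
Your proposal is correct and follows essentially the same route as the paper's proof: Principle~\ref{princ::main} by construction, then induction on $k$ for each transformation with exactly the transformation laws you list for $x_k$, $\gf$ and $B_{k-1}$, the equivariance of $\C$, $\M_\theta$ (coordinate-wise, bias-free, ReLU) and $\U$ checked in turn, and the base case settled by the scaling behaviour of the Barzilai--Borwein initialization of $B_{-1}$. The cancellation bookkeeping in your step (iv) and the architectural lemmas in step (iii) are precisely the computations the paper carries out explicitly.
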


Another benefit of preserving and enhancing existing algorithms is that their coherent structures allow deriving convergence results, proved later in Appendix~\ref{app:proofs}.
\begin{theorem}\label{thm::conv}
Assume that $f$ has $L$-Lipschitz continuous gradient and that for all $k\in\N$, $B_k$ is positive definite with eigenvalues lower and- upper-bounded by $c,\ C>0$ respectively. Then for any step-size $\gamma \leq  \frac{2}{CL}$, $(f(x_k))_{k\in\N}$ converges and $\lim_{k\to+\infty} \norm{\gf(x_k)} = 0$.
\end{theorem}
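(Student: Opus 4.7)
}
The plan is to run the standard descent-lemma argument, carefully exploiting the symmetric positive-definite structure of $B_k$ so that the admissible step-size matches $\gamma \leq 2/(CL)$ rather than the weaker $\gamma \leq 2c/(LC^2)$ one would get from a cruder bound. Writing $g_k = \gf(x_k)$, the update is $x_{k+1} - x_k = -\gamma B_k g_k$, so $L$-smoothness of $f$ gives the descent lemma
\[
f(x_{k+1}) \le f(x_k) - \gamma \langle g_k, B_k g_k\rangle + \tfrac{L\gamma^2}{2}\norm{B_k g_k}^2.
\]

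The key step is to bound $\norm{B_k g_k}^2$ by $\langle g_k,B_k g_k\rangle$ rather than by $\norm{g_k}^2$. Diagonalising the symmetric positive-definite $B_k$ as $U_k\Lambda_k U_k^T$ with eigenvalues in $[c,C]$, one immediately reads off $\langle g_k,B_k g_k\rangle = \sum_i \lambda_i |u_i^T g_k|^2$ and $\norm{B_k g_k}^2 = \sum_i \lambda_i^2 |u_i^T g_k|^2$, and since $\lambda_i \le C$ this yields the key inequality
\[
\norm{B_k g_k}^2 \le C\, \langle g_k, B_k g_k\rangle.
\]
Plugging this into the descent lemma and then using $\langle g_k, B_k g_k\rangle \ge c\norm{g_k}^2$ gives
\[
f(x_{k+1}) \le f(x_k) - \gamma\Bigl(1-\tfrac{L\gamma C}{2}\Bigr) c\,\norm{g_k}^2.
\]
For $\gamma < 2/(CL)$ the coefficient $\rho := \gamma c(1-L\gamma C/2)$ is strictly positive (the boundary case $\gamma = 2/(CL)$ corresponds to $\rho = 0$ and a trivial non-descent, so I would state the result with strict inequality or note this edge case).

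Once the sufficient-decrease inequality $f(x_{k+1}) \le f(x_k) - \rho\norm{g_k}^2$ is established, the rest is routine. Telescoping from $0$ to $K-1$ and using that $f$ is bounded below (by the definition of $\F_n$),
\[
\rho \sum_{k=0}^{K-1}\norm{g_k}^2 \le f(x_0) - f(x_K) \le f(x_0) - \inf f < +\infty,
\]
so the series $\sum_k\norm{\gf(x_k)}^2$ converges, which forces $\norm{\gf(x_k)} \to 0$. Moreover, the sequence $(f(x_k))$ is non-increasing and bounded below, hence convergent.

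The main (mild) obstacle is really just picking the correct bound on $\norm{B_k g_k}^2$: using $\norm{B_k g_k}\le C\norm{g_k}$ would yield the stricter condition $\gamma < 2c/(LC^2)$, whereas the spectral bound $\norm{B_k g_k}^2 \le C\langle g_k,B_k g_k\rangle$ recovers exactly the stated range. No assumption on $f$ beyond lower-boundedness and $L$-smoothness, and no convexity, is needed; the eigenvalue sandwich on $B_k$ is what replaces quasi-Newton self-correcting arguments and lets the proof go through for the learned $y_k$ output by $\M_\theta$, provided only that this choice still yields uniformly conditioned $B_k$.
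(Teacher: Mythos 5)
Your proof is correct and follows essentially the same route as the paper: the descent lemma combined with the spectral decomposition of $B_k$ to obtain $\norm{B_k \gf(x_k)}^2 \le C\langle \gf(x_k), B_k\gf(x_k)\rangle$, then monotonicity and telescoping. Your remark about the boundary case $\gamma = 2/(CL)$ is a valid observation that the paper glosses over (there the decrease coefficient vanishes and the summability argument no longer forces $\gf(x_k)\to 0$), but otherwise the two arguments coincide.
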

It is important to note that Theorem~\ref{thm::conv} is more restrictive than usual convergence theorems. It is indeed based on strong assumptions regarding the eigenvalues of the matrices $(B_k)_{k\in\N}$.
Yet, since $B_k$ is constructed based on the output of the model $\M_\theta$, the failure can only come from the learning part of the algorithm. One could thus optionally enforce the assumption by design of $\M_\theta$ in the fashion of \citep{moeller2019controlling,heaton2023safeguarded}. This would put additional restrictions on the model and does not seem necessary in the experiments below.
%%%%%%%%%%%%%%%%%%%%%%%%%%%%%%%%%%%%%%%%%%%%%%%%%%%%%%%%%%%%%%%%%%%%%%%%%%%%%%%%%%%%%

\begin{figure*}[t]
	\centering
	\raisebox{-0.5cm}{\rotatebox{90}{$ \frac{f(x_k)-f^\star}{f(x_0)-f^\star}$}}
	\begin{minipage}{0.25\linewidth}
		\centering
		\includegraphics[width=\linewidth]{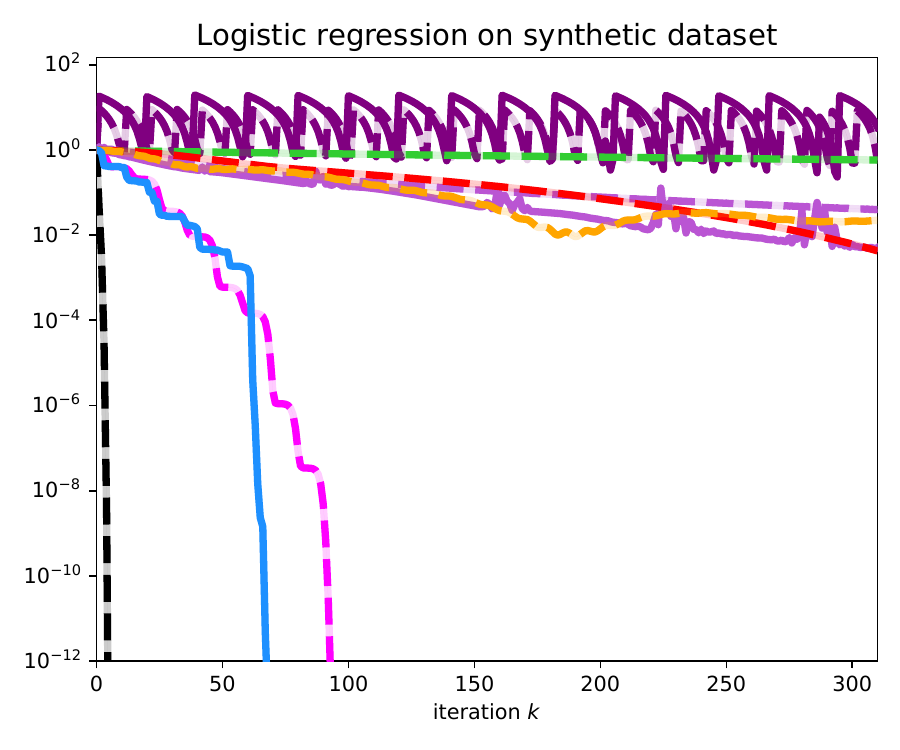}
	\end{minipage}
	\begin{minipage}{0.25\linewidth}
		\centering
		\includegraphics[width=\linewidth]{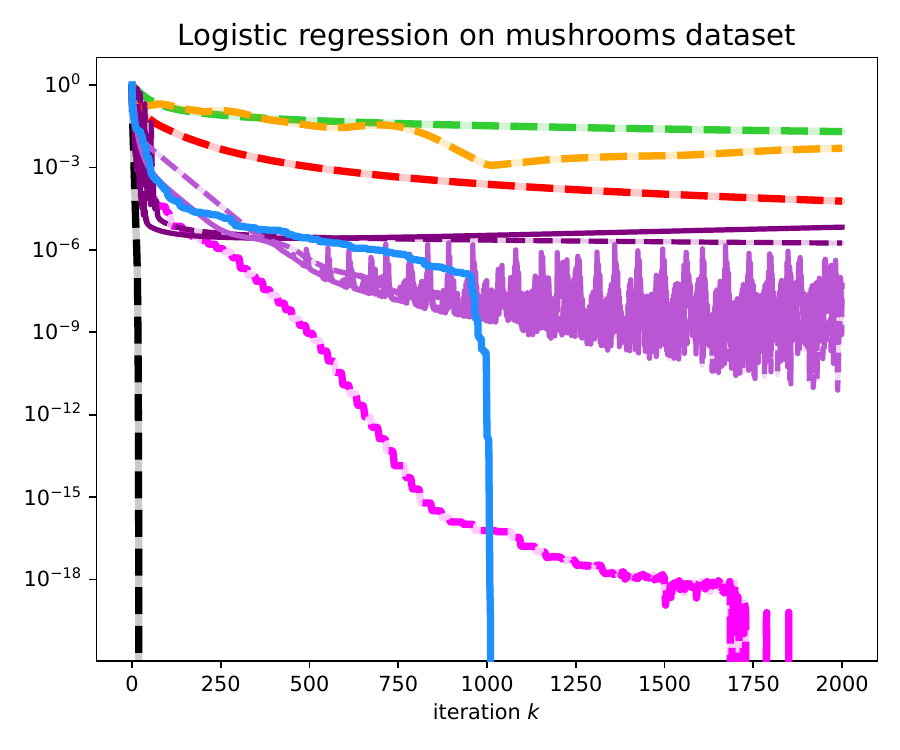}
	\end{minipage}
	\begin{minipage}{0.25\linewidth}
		\centering
		\includegraphics[width=\linewidth]{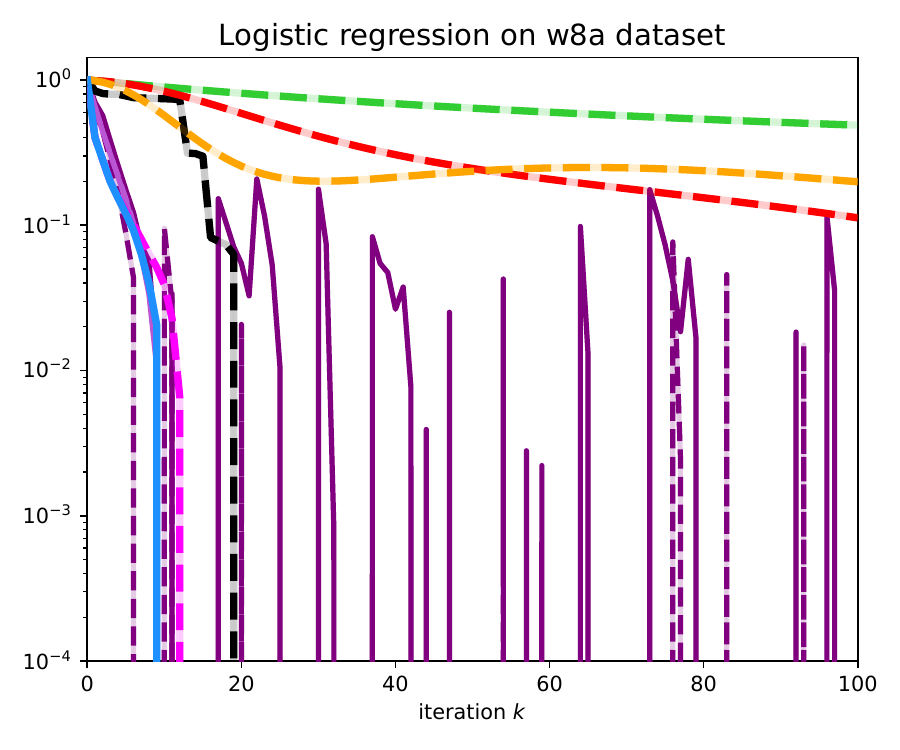}
	\end{minipage}
	\\
	\raisebox{-0.5cm}{\rotatebox{90}{$ \frac{f(x_k)-f^\star}{f(x_0)-f^\star}$}}
	\begin{minipage}{0.25\linewidth}
		\centering
		\includegraphics[width=\linewidth]{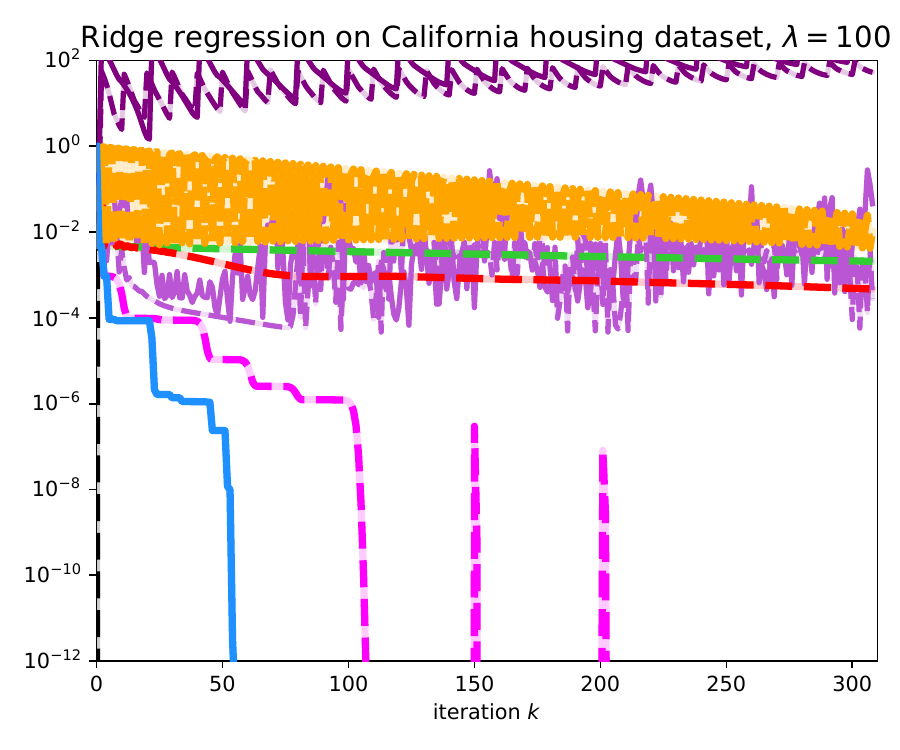}
	\end{minipage}
	\begin{minipage}{0.01\linewidth}
		\ 
	\end{minipage}
	\begin{minipage}{0.22\linewidth}
		\begin{flushright}
			\includegraphics[width=.95\linewidth]{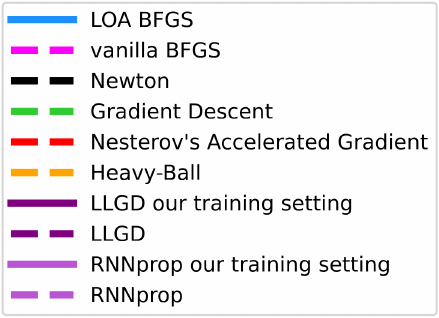}
		\end{flushright}
	\end{minipage}
	\begin{minipage}{0.005\linewidth}
		\ 
	\end{minipage}
	\begin{minipage}{0.25\linewidth}
		\centering
		\includegraphics[width=\linewidth]{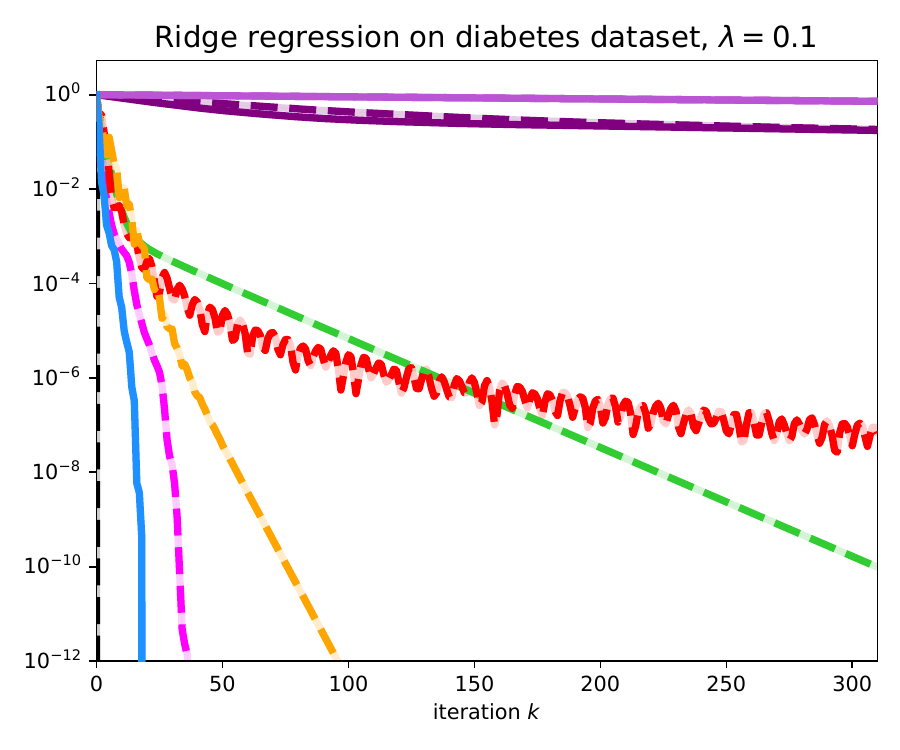}
	\end{minipage}
	\caption{{Comparison of our method against BFGS, other hand-crafted algorithms, and L2O methods (see Appendix~\ref{app:L2Obaselines}), on different types of problems (detailed in Appendix~\ref{app:detailsOnExp}) \emph{without retraining} $\M_\theta$. Our LOA always outperforms vanilla BFGS, evidencing its ability to work far beyond the training setting.} \label{fig::realworld}}
\end{figure*}

\section{EXPERIMENTS AND PRACTICAL CONSIDERATIONS}\label{sec::practical}
In addition to the design choices that we already made to follow our principles (special model, use of ReLU, no bias, etc.), we discuss practical considerations that ease the training of our model $\M_\theta$. In what follows we consider a training set of $D\in\N$ problems indexed by superscripts $(f_j,x_0^j,S_0^j)$, for $j=1,\ldots,D$. For each problem we run the algorithm for $K\in\N$ iterations and denote by $(x_k^j)_{k\in \{0,\ldots,K\}}$ the resulting sequence.

\paragraph{Loss function}
Our loss function is based on the last values $(f_j(x_K^j))_{j\in\{1,\ldots,D\}}$. However, we make it invariant to the optimal value by using the sub-optimality gap $f_j(x_K^j)-f_j^{\star}$ where $f_j^{\star}$ is the minimum of $f_j$. A key element to take into account in optimization is that the magnitude of function values may heavily vary between problems and across iterations. This can be slightly mitigated by normalizing by the initial sub-optimality gap $f_j(x_0^j)-f_j^{\star}$, however we instead propose to run vanilla BFGS for $K$ iterations as well and normalize by its sub-optimality gap. After averaging over all problems our loss function is:
\begin{equation}\label{eq::lossfunction}
  \mathcal{L}(\theta) = \frac{1}{D}\sum_{j=1}^D \log\left(1 + \frac{f_j(x_K^j(\theta))-f_j^{\star}}{f_j(\tilde{x}_K^j)-f_j^{\star}}\right),
\end{equation}
where $\tilde{x}^j_K$ is the $K$-th iteration of BFGS ran on the same problem. 
This is related to the idea of \citet{chen2020training} who trained by competing against a baseline, the novelty in \eqref{eq::lossfunction} is the use of relative function values.

We make the loss even more robust to different magnitudes by applying a $\log(1+\cdot)$ composition. We emphasize that the algorithm does not make use of $f_j^{\star}$, which is only used at training time.

\paragraph{Initialization of $\M_\theta$} Training L2O models is notoriously difficult as the loss function may quickly explode \citep{wang2021guarantees}. Our approach allows for a specific trick that dramatically stabilizes training. Indeed, according to  Section~\ref{sec::variational}, BFGS is a special case of our algorithm in which $y_k=d_k$. By initializing the weights of the last FC layer to zero and the linear layer to be $(0,0,0,0,1,0)$ one can check that our algorithm is initialized to exactly coincide with BFGS before training. According to \eqref{eq::lossfunction}, the initial value of the loss function is always $\log(2)$ which dramatically stabilizes the training as shown in Figure~\ref{fig::train} in Appendix~\ref{app::Exp}. To the best of our knowledge this is a new strategy.

\paragraph{Methodology and results} We construct a training set of $D=20$ problems made of ill-conditioned quadratics functions in dimension $n=100$ with eigenvalues generated at random and random initializations. The details are provided in Apppendix~\ref{app:detailsOnExp} and the execution time of each algorithm is reported in Appendix~\ref{app::wallclock}. We train our model for $K=40$ iterations. We then evaluate the performance of the algorithm on several settings that differ from the training one: more than $40$ iterations, in-distribution test problems, quadratics in larger dimension, and finally on five other problems including logistic and ridge losses, and some real-world datasets (Figure~\ref{fig::realworld}). 
We compare to several hand-crafted algorithms, as well as L2O baselines \citep{andrychowicz2016learning,lv2017learning}, see Appendix~\ref{app:L2Obaselines}.
The code for reproducing the results, including the trained weights of our LOA are available in a public repository.\footnote{\url{https://github.com/camcastera/L2OtoLOA}}

Looking first at the training setting in Figure~\ref{fig::quad100}, observe that our L2O model improves upon BFGS for every problem after $40$ iterations, sometimes by several orders of magnitudes. This significant improvement transfers to $100$ (trained only for $40$) iterations for almost all problems, and also to the test set, as well as on quadratic functions in dimension $500$ (see Figure~\ref{fig::quad500} in Appendix~\ref{app::Exp}).

{Figure~\ref{fig::realworld}, shows that our LOA does not break on different objective functions and datasets, but even improves upon BFGS despite not having been trained on these. Algorithm~\ref{algo::L2OQN} is also more robust than the other L2O algorithms considered: they exhibit slow decrease or heavy oscillations on most of the problems of Figure~\ref{fig::realworld}. The experiments evidence the benefits of the strategy proposed in Section~\ref{sec::principles} to achieve generalization}.

%%%%%%%%%%%%%%%%%%%%%%%%%%%%%%%%%%%%%%%%%%%%%%%%%%%%%%%%%%%%%%%%%%%%%%%%%%%%%%%%%%%%%
\section{CONCLUSION}
We provided a new approach for designing more robust learned optimization algorithms. Our work blends all aspects of L2O: from optimization theory to machine learning models, including implementation and training considerations. We illustrate how promising the approach is in practice by applying our techniques to build a L2O-enhanced BFGS algorithm. It results in an algorithm outperforming vanilla BFGS consistently beyond the training setting. Enhancing existing algorithms allowed us to provide preliminary theoretical guarantees, which most L2O algorithms lack of, as well as a new training strategy that significantly eases the training and mitigates the difficulty of training L2O models.
{Our approach is generic and can be applied to almost any algorithm (\eg HB discussed in Section~\ref{sec::mathformalism}). This work thus calls for exploring many directions, such as enhancing other algorithms, designing more advanced models, adding new principles. An important next step is to adapt the principles and the recipe to the case of stochastic algorithms.
}
\FloatBarrier

%%%%%%%%%%%%%%%%%%%%%%%%%%%%%%%%%%%%%%%%%%%%%%%%%%%%%%%%%%%%%%%%%%%%%%%%%%%%%%%%%%%%%
\acknowledgments

This work is supported by the ANR-DFG joint project TRINOM-DS under number DFG-OC150/5-1. Part of this work was done while C.\ Castera was with the department of mathematics of the University of Tübingen, Germany.
We thank the anonymous reviewers for their suggestions that helped improving the experiments.
C. Castera thanks Michael Sucker for useful discussions and Armin Beck for pointing a mistake in a proof.
We also thank the development teams of the following libraries: Python \citep{rossum1995python}, Matplotlib \citep{hunter2007matplotlib} and Pytorch\citep{paszke2019pytorch}.

\clearpage

\bibliographystyle{plainnat}
\bibliography{biblio.bib}

%%%%%%%%%%%%%%%%%%%%%%%%%%%%%%%%%%%%%%%%%%%%%%%%%%%%%%%%%%%%%%%%%%%%%%%%%%%%%%%%%%%%%
\newpage
\appendix

\onecolumn

\textbf{\Large Supplementary Material}

\setcounter{section}{7} 
\setcounter{equation}{7} 
%%%%%%%%%%%%%%%%%%%%%%%%%%%%%%%%%%%%%%%%%%%%%%%%%%%%%%%%%%%%%%%%%%%%%%%%%%%%%%%%%%%%%

\section{DETAILED ANALYSIS OF EQUIVARIANCE OF POPULAR ALGORITHMS}\label{app::priors}

Below we detail how to obtain the properties listed in Table~\ref{tab::equiv}. We begin by studying the transformations.

\subsection{The Chain-rule}

In this section we detail how each transformation affects the derivatives of $f$. All these results are based on the chain-rule. For an invertible mapping $\T\colon\R^n\to\R^n$, we  study the function $\hat{f} = f\circ\T^{-1}$. The chain rule states that $\forall y\in\R^n$
$$
D_y (\hf) = D_y (f\circ \T^{-1}) = D_{\T^{-1}(y)}(f) \cdot D_y (\T^{-1}),
$$
where $D_y(\hf)$ is the Jacobian of $\hf$ at $y$. Rewriting this in terms of gradients (the transpose of the Jacobian):
$$
\nabla\hf(y) =  (D_y (\T^{-1}))^T \gf(\T^{-1}(y)).
$$
In most of what follows we will apply the chain rule above to the point $\hat{x}=\T(x)$, which yields
\begin{equation}\label{eq::}
	\nabla\hf(\hat{x}) =  (D_{\T(x)} (\T^{-1}))^T \gf(\T^{-1}(\T(x))) = (D_{\T(x)} (\T^{-1}))^T \gf(x),
\end{equation}
so the Jacobian of $\T^{-1}$ captures how the gradient is transformed. We now detail this for each transformation.

\subsection{List of Transformations\label{app:listoftransfo}}
We consider the transformations in Table~\ref{tab::equiv}. For each case we redefine $\T$ and, without restating it, define $\hat{f} = f\circ \T^{-1}$ and $\hat{x}=\T(x)$, for all $x\in\R^n$.

\textbf{Translation.} Let $v\in\R^n$ and for all $x\in\R^n$, consider the translation $\T(x) = x+v$.
Then one can see that $D (\T^{-1}) = \mathbb{I}_n$ which implies that $(D_{\T(x)} (\T^{-1}))^T = \mathbb{I}_n$. So, $\nabla\hat{f}(\hat{x}) = \gf(x)$ and similarly, one can show that $\nabla^2 \hf(\hx) = \Hf(x)$.

\textbf{Orthogonal Linear Transformations.} Let $P\in \R^{n\times n}$ an orthogonal matrix ($P^TP=\mathbb{I}_n$) and $\T\colon x\in\R^n\mapsto Px$. Then using the orthogonality of $P$, $\T^{-1}(x) = P^{-1}x =P^T x$. It is a linear mapping, so $D \T^{-1} = P^T$ and $(D_{\T(x)} (\T^{-1}))^T) = (P^T)^T = P$. Therefore $\nabla\hf(\hx) = P\gf(x)$. Similarly, using the linearity of $\T^{-1}$, we can show that $\nabla^2\hf(\hx) = P\Hf(x)P^T$.

\textbf{Permutations.} Permutation matrices are a specific type of orthogonal matrices, therefore the above directly applies.

\textbf{Geometric Rescaling}. Let $\lambda>0$ and consider the transformation $\T\colon x\in\R^n \mapsto \lambda x$. Then $\T^{-1}(x) = \frac{1}{\lambda}x$ which is again a linear mapping so $D \T^{-1} = \frac{1}{\lambda} \mathbb{I}_n$. We deduce as before that $\nabla\hf(\hx) = \frac{1}{\lambda}\gf(x)$ and $\nabla^2\hf(\hx) = \frac{1}{\lambda^2}\Hf(x)$.

\textbf{Function Rescaling}. Let $\lambda>0$, when considering $\hat{f} = \lambda f$, the linearity of the differentiation directly gives $\nabla\hf = \lambda \gf$ and $\nabla^2\hf = \lambda \Hf$.

\subsection{Analysis of Popular Algorithms\label{app:PrincOtherAlgos}}
We now show the properties of Table~\ref{tab::equiv} for each algorithm therein, except for BFGS which is analyzed together with Algorithm~\ref{algo::L2OQN} later in Section~\ref{app:proofPrinc}. Each time, the proofs are done by induction. We can safely assume that $x_0$ and $S_0$ are properly adapted so that the induction holds for $k=0$ (this was explained in Section~\ref{sec::principles}).

To show each equivariance property (or invariance in the case of function rescaling), we fix $k\in\N$ and assume that the equivariance holds for all iterates up to $k$, and then prove that it still holds at iteration $k+1$.

\paragraph{Gradient descent and HB}
We already extensively discussed the properties of HB which was our running example in Section~\ref{sec::principles}. As for gradient descent, it is simply HB with $\alpha=0$. Using the results of Section~\ref{app:listoftransfo} one can straightforwardly deduce translation, permutations and orthogonal equivariances. Thus we only discuss the case of rescaling.
For $\lambda>0$ and $\hat{f}=f(\frac{1}{\lambda}\cdot)$, assuming that the induction hypothesis $\hx_k=\lambda x_k$ holds, we previously showed that, $\nabla\hat{f}(\hx_k)=\frac{1}{\lambda}\gf(x_k)$, so the iteration of HB reads,
$$
\hat{x}_{k+1}  = \hx_k + \alpha \hat{d}_k + \gamma \nabla\hf(\hx_k) = \lambda x_k +\lambda \alpha  d_k + \frac{\gamma}{\lambda}\gf(x_k).
$$
So for $\lambda\neq 1$ we see that $\hat{x}_{k+1}\neq \lambda x_{k+1}$. This can be fixed however by tuning $\gamma$ specifically for each problem (we would get $\hat{\gamma} = \lambda^2\gamma$). The case of function rescaling $\hat{f}=\lambda f$ is almost identical.

\paragraph{Newton's method} The update of Newton's method reads
\begin{equation*}
	x_{k+1} = x_k - \left[\nabla^2f(x_k)\right]^{-1}\gf(x_k).
\end{equation*}
As above, translation equivariance is straightforward. As for orthogonal matrices $P$, using the results from Section~\ref{app:listoftransfo} it holds that
\begin{align*}
	\hx_{k+1}
	&= \hx_k - \left[\nabla^2\hf(\hx_k)\right]^{-1}\nabla f(\hx_k)
	= P x_k - \left[P\nabla^2 f(x_k)P^T\right]^{-1} P\gf(x_k)
	\\
	&= P x_k - (P^T)^{-1}\left[\nabla^2 f(x_k)\right]^{-1} P^{-1} P\gf(x_k) = P x_k - P\left[\nabla^2 f(x_k)\right]^{-1}\gf(x_k),
\end{align*}
which proves the equivariance.

For geometric rescaling by $\lambda>0$, remark from Section~\ref{app:listoftransfo} that the inverse Hessian is rescaled by $\lambda^2$ and the gradient is rescaled by $1/\lambda$, so the result follows. The same is true for invariance with respect to function rescaling.

\paragraph{The ADAM Algorithm}
The iterations of the ADAM algorithm read
\begin{equation*}
	\begin{cases}
		m_k &= \beta_1 m_{k-1} + (1-\beta_1)\gf(x_k)
		\\ v_k^2 & = \beta_2 v_{k-1}^2 + (1-\beta_2)\gf(x_k)\odot \gf(x_k)
		\\
		x_{k+1} & = x_k - \gamma\frac{m_k}{\sqrt{v_k^2}}
	\end{cases},
\end{equation*}
where $\beta_1,\beta_2\in [0,1)$, $\gamma>0$ is the step-size, $\odot$ denotes the element-wise product, the square root and quotient are applied element wise, and $m_{-1},v_{-1}^2\in\R^n$.

Again, translation equivariance is straightforward using the results from section~\ref{app:listoftransfo}. The robustness with respect to the function rescaling is also easy to check in that case since both $m_k$ and $\sqrt{v_k^2}$ are rescaled like $\gf$, hence no need to adapt $\gamma$ unlike HB and GD. However, for the same reason we see that equivariance w.r.t.\ geometric rescaling does not hold, except if we adapt $\gamma$.
We now consider an orthogonal matrix $P$. According to section~\ref{app:listoftransfo} and assuming that up to iteration $k$ ADAM is equivariant to orthogonal transformations, we have $\hf(\hx) = P\gf(x)$ and $\hat{m}_k = Pm_k$. However, looking at $v_k^2$, note that
\begin{equation}\label{eq::ADAM}
	\nabla\hf(\hx_k) \odot \nabla\hf(\hx_k) =  (P\gf(x_k))\odot(P\gf(x_k))
\end{equation}
which in general is not equal to $P (\gf(x_k)\odot\gf(x_k))$. Therefore, for most orthogonal matrices we do not have $\hat{v}_k = P v_k$ and equivariance does not hold.

Nevertheless, in the special case where $P$ is a permutation matrix, each line of $P$ contains exactly one coefficient equal to one and all others are zero. Since $\odot$ is an element-wise operation, one can check that we then have $ (P\gf(x_k))\odot(P\gf(x_k)) = P(\gf(x_k)\odot\gf(x_k))$ such that $\hat{v}_k = Pv_k$. Applying the same reasoning to all other element-wise operations in \eqref{eq::ADAM}, we deduce that, despite not being equivariant to all orthogonal matrices, ADAM is permutation equivariant.

\subsection{On the Difficulty of Preserving Orthogonal Equivariance for LOA\label{app:orthoL2O}}

The discussion above regarding ADAM shows why preserving equivariance to all orthogonal matrices is very hard for LOA since even element-wise operations may not commute with orthogonal matrices.

To give an example, let $P$ be an orthogonal matrix, let $y\in\R^n$ and $\sigma\colon\R\to\R$ be a \emph{non-linear} activation function (to be applied element wise in layers of a neural network). To preserve equivariance with respect to $P$, one would want that $\sigma(Py) = P\sigma(y)$, which, for the $i$-th coordinate reads,
\begin{equation}\label{eq::LOAortho}
	\sigma\left(\sum_{j=1}^n P_{i,j} y_j\right) = \sum_{j=1}^n P_{i,j}\sigma(y)_j.
\end{equation}
Yet, since $\sigma$ is assumed to be non-linear, it does not commute with the sum (which would still not even be sufficient for \eqref{eq::LOAortho} to hold).
Therefore, if $y$ is the output of an FC layer (used coordinate-wise like in Algorithm~\ref{algo::L2OQN}) in a neural network, then applying a non-linear activation function (\eg $\mathrm{ReLU}$ or sigmoid), we see that equivariance with respect to $P$ is broken. This shows that orthogonal equivariance is not even compatible with coordinate-wise FC layers and hence hardly possible to achieve for LOA.

Finally, note that when $P$ is a permutation matrix, then $\forall i\in\{1,\ldots,n\}$, there exists $l\in\{1,\ldots,n\}$ such that $P_{i,l}=1$ and for all $j\neq l$, $P_{i,j}=0$. So \eqref{eq::LOAortho} becomes
\begin{equation}\label{eq::LOApermut}
	\sigma\left(P_{i,l} y_l\right) = P_{i,l}\sigma(y)_l   \iff   \sigma\left( y_l\right) = \sigma(y)_l,
\end{equation}
and since $\sigma$ is applied element wise, \eqref{eq::LOApermut} holds true. So permutation equivariance is more compatible with LOA than orthogonal equivariance.

%%%%%%%%%%%%%%%%%%%%%%%%%%%%%%%%%%%%%%%%%%%%%%%%%%%%%%%%%%
\section{PROOF OF THEOREM~\ref{thm::principles}}\label{app:proofPrinc}
\begin{proof}[Proof of Theorem~\ref{thm::principles}]
	Principle~\ref{princ::main} holds by construction of the algorithm where $n$ is not used to choose $p$ nor $\nf$.
	We now prove that Principles~\ref{princ::translation} to~\ref{princ::scale} hold one by one. As for other algorithms in Section~\ref{app:PrincOtherAlgos}, in each case we explicitly state which transformation $\T$ is considered and implicitly redefine $\hat{f}=f\circ\T^{-1}$ and define $(\hat{x}_k)_{k\in\N}$ as the iterates of the algorithm applied to $(\hat{f},\hat{x}_0,\hat{S}_0)$ (all quantities with a ``hat'' symbol are defined accordingly). We again proceed by induction: we fix $k\in\N$ and assume that equivariance (or invariance for Principle~\ref{princ::scale}) holds up to iteration $k$ and show that it still holds at iteration $k+1$. We also show that the principles hold at $k=0$ by construction.

	Unlike the  algorithms discussed in Section~\ref{app:PrincOtherAlgos}, Algorithm~\ref{algo::L2OQN} and BFGS additionally use a matrix $B_{k}$, (stored in the state $S_{k+1}$). Since $B_k$ aims to approximate the inverse Hessian $\Hf(x_k)^{-1}$, we expect $B_k$ to be transformed by $\T$ in the same way as $\Hf(x_k)^{-1}$ is (see Section~\ref{app:listoftransfo}). We will prove that this is the case, again by induction.

	%%%%%%

	\textbf{Translation.}
	Let $v\in\R^n$ and the translation $\T\colon x\in\R^n\mapsto x+v$.
	Assume that $\forall i\leq k$, $\hx_i = \T(x_i) = x_i+v$ and that $\forall i\leq k-1$, $\hat{B}_i = B_i$. Then $\hat{d}_k = \hat{x}_k-\hat{x}_{k-1} = d_k$ and we showed in Section~\ref{app:listoftransfo} that $\nabla\hf(\hx_k) = \gf(x_k)$. Similarly, $\widehat{\Delta g}_k = \dg{k}$.
	So,
	$$
	\hat{I}_k = \left(\hat{B}_{k-1}\widehat{\Delta g}_k, \hat{d}_k, -\gamma \hat{B}_{k-1}\nabla \hf(\hx_k)\right) =  \left(B_{k-1}\Delta g_k, d_k, -\gamma B_{k-1}\nabla f(x_k)\right) = I_k,
	$$
	This is not surprising as we explained in Section~\ref{sec::principles} that we constructed $\C$ so that the above is true. Then we directly deduce $\hat{y}_k = \M_\theta(\hat{I}_k) = \M_\theta(I_k) = y_k$ and the rest of the proof follows.

	As for the case $k=0$, by construction (see Section~\ref{sec::principles}), $\hx_0=x_0+v$ and $\hx_{-1} = x_{-1}+v$. One can then easily check that our choice of $B_{-1}$ (defined in Section~\ref{sec::ourL2O}) is translation invariant. So by induction, Principle~\ref{princ::translation} holds.
	%%%%%%

	%%%%%%

	\textbf{Permutation.}

	Let $P$ a permutation matrix of $\R^n$ and let $\T\colon x\in\R^n\mapsto Px$.
	Assume that $\forall i\leq k$, $\hx_i = \T(x_i) = Px_i$ and that $\forall i\leq k-1$, $\hat{B}_i = P B_iP^T$.
	Note that the hypothesis on $B_i$ matches that of the inverse Hessian in Section~\ref{app:listoftransfo}.
	Then $\hat{d}_k = Px_k-Px_{k-1} = Pd_k$. We showed in Section~\ref{app:listoftransfo} that $\nabla\hf(\hx_k) = P\gf(x_k)$, and similarly, $\widehat{\Delta g}_k = P\dg{k}$.
	So,
	$$
	\hat{I}_k = \left(PB_{k-1}P^TP\Delta g_k, Pd_k, -\gamma PB_{k-1}P^TP\nabla f(x_k)\right) = PI_k,
	$$
	\ie $\C$ is permutation equivariant as intended. Then $\hat{y}_k = \M_\theta(\hat{I}_k) = \M_\theta(PI_k)$ and as justified in Appendix~\ref{app:orthoL2O}, all the operations applied element-wise in $\M_\theta$ are permutation equivariant, and the averaging also is. So $\M_\theta$ is permutation equivariant, \ie $\hat{y}_k = Py_k$.

	Regarding the step $\U$, we recall the notation $r_k = d_k - B_{k-1}\dg{k}$ used in \eqref{eq::genBFGSupdate}. Remark that $\hat{r}_k = Pd_k - PB_{k-1}\dg{k} = Pr_k$, and substituting $\hat{y}_k$, $\widehat{\Delta g}_k$ and $\hat{r}_k$ in \eqref{eq::genBFGSupdate} (and using again $P^TP=\mathbb{I}_n$), we obtain $\hat{B}_k = PB_kP^T$ and $\hx_{k+1} = Px_{k+1}$.

	Finally, at $k=0$, by construction $\hx_0=Px_0$ and $\hx_{-1}=Px_{-1}$ and one can easily check that $\hat{\gamma}_{\mathrm{BB}}^{(0)} = \gamma_{\mathrm{BB}}^{(0)}$ (again due to $P$ being orthogonal), such that $\hat{B}_{-1} = B_{-1}$. So Principle~\ref{princ::permut} holds true.
	%%%%%%

	%%%%%%

	\textbf{Geometric rescaling.}
	Let $\lambda>0$ and let $\T\colon x\in\R^n\mapsto \lambda x$.
	Assume that $\forall i\leq k$, $\hx_i = \T(x_i) = \lambda x_i$ and that $\forall i\leq k-1$, $\hat{B}_i = \lambda ^2 B_i$ (as in Section~\ref{app:listoftransfo}).
	Then $\hat{d}_k = \lambda x_k- \lambda x_{k-1} = \lambda  d_k$. We also showed in Section~\ref{app:listoftransfo} that $\nabla\hf(\hx_k) = \frac{1}{\lambda}\gf(x_k)$, thus $\widehat{\Delta g}_k = \frac{1}{\lambda}\dg{k}$.
	So,
	$$
	\hat{I}_k = \left(\lambda^2 B_{k-1}\frac{1}{\lambda}\Delta g_k, \lambda d_k, -\gamma \lambda^2 B_{k-1}\frac{1}{\lambda}\nabla f(x_k)\right) = \lambda I_k,
	$$
	which means that $\C$ is equivariant as we prescribed. Then our model $\M_\theta$ is a composition of linear operations and $\mathrm{ReLU}$ activation functions which are all equivariant to rescaling, so the model is equivariant, \ie $\hat{y}_k = \lambda y_k$. Plugging this into the update step we obtain
	\begin{equation*}
		\hat{B}_k = \lambda^2 B_{k-1} + \frac{1}{\langle\lambda^{-1}\dg{k}, \lambda y_k\rangle} \left[ \lambda^2 r_{k} y_k^T + \lambda^2 y_k r_{k}^T - \frac{\langle\lambda^{-1}\dg{k}, \lambda r_{k}\rangle}{\langle\lambda^{-1}\dg{k},\lambda y_k\rangle} \lambda^2 y_k y_k^T \right] = \lambda^2 B_k.
	\end{equation*}

	Finally, the case $k=0$ holds by construction and thanks to the BB step-size since
	$$\hat{\gamma}_{\mathrm{BB}}^{(0)} = \frac{\langle \lambda^{-1}\dg{0},\lambda d_0\rangle}{\lambda^{-2}\norm{\dg{0}}^2} = \lambda^2\gamma_{\mathrm{BB}}^{(0)}.$$
	This shows how the choice of $B_{-1}$ is crucial to preserve equivariance to rescaling. Overall Principle~\ref{princ::geomscale} holds.
	%%%%%%

	%%%%%%

	\textbf{Function rescaling.}
	Let $\lambda>0$ and consider $\hat{f}=\lambda f$. For this last principle we want to prove invariance of the algorithm. Therefore assume that $\forall i\leq k$, $\hx_i = x_i$ and that $\forall i\leq k-1$, $\hat{B}_i = \frac{1}{\lambda} B_i$ (it scales like the inverse Hessian).
	Then $\hat{d}_k = d_k$ and we also have $\nabla\hf(\hx_k) = \lambda \gf(x_k)$, thus $\widehat{\Delta g}_k = \lambda \dg{k}$. So
	$$
	\hat{I}_k = \left(\frac{1}{\lambda} B_{k-1}\lambda \Delta g_k, d_k, -\gamma \frac{1}{\lambda} B_{k-1}\lambda\nabla f(x_k)\right) = I_k,
	$$
	so $\C$ is invariant, which directly implies $\hat{y}_k = y_k$ and then
	\begin{equation*}
		\hat{B}_k = \frac{1}{\lambda} B_{k-1} +  \frac{1}{\langle\lambda\dg{k}, y_k\rangle} \left[ r_{k} y_k^T + y_k r_{k}^T - \frac{\langle\lambda\dg{k},  r_{k}\rangle}{\langle\lambda\dg{k}, y_k\rangle}  y_k y_k^T\right] = \frac{1}{\lambda} B_k.
	\end{equation*}

	Finally, the case $k=0$ holds again thanks to the use of the BB step-size to initialize $B_{-1}$, which proves that Principle~\ref{princ::scale} holds and concludes the proof.
	%%%%%%%%
\end{proof}

\begin{remark}
	The proof above can easily be applied to BFGS since it corresponds to the special case where $\M_\theta$ is replaced by $y_k=d_k$.
\end{remark}

%%%%%%%%%%%%%%%%%%%%%%%%%%%%%%%%%%%%%%%%%%%%%%%%%%%%%%%%%%
\section{PROOF OF THEOREM~\ref{thm::conv}}\label{app:proofs}
\begin{proof}[Proof of Theorem~\ref{thm::conv}]
	Assume that $f$ has $L$-Lipschitz continuous gradient, that is, for all $x,y\in\R^n$,
	$$
	\norm{\gf(x)-\gf(y)}\leq L \norm{x-y}.
	$$
	Then the \emph{descent lemma} (see \eg \cite{garrigos2023handbook}) states that for all $x,y\in\R^n$,
	\begin{equation}\label{eq::descentlemma}
		f(y) \leq f(x) + \langle \gf(x),y-x\rangle + \frac{L}{2}\norm{y-x}^2.
	\end{equation}
	Now let $(x_k)_{k\in\N}$ and $(B_k)_{k\in\N}$ be respectively the sequence of iterates and the matrices generated by Algorithm~\ref{algo::L2OQN} applied to $(f,x_0,S_0)$. Using the descent lemma \eqref{eq::descentlemma}, we get,
	\begin{align*}
		&f(x_{k+1}) \leq f(x_k) + \langle \gf(x_k),-\gamma B_k\gf(x_k)\rangle + \frac{L}{2}\gamma^2\norm{B_k\gf(x_k)}^2,
	\end{align*}
	which we rewrite
	\begin{equation}\label{eq::QNdescentlemma}
		f(x_{k+1}) \leq f(x_k) + \left\langle B_k\gf(x_k) ,-\gamma\gf(x_k) + \frac{L}{2}\gamma^2 B_k\gf(x_k)\right\rangle.
	\end{equation}
	By construction, (see \eqref{eq::genBFGSupdate}), $B_k$ is real symmetric, so there exists an orthogonal matrix $P_k\in\R^{n\times n}$ and a diagonal matrix $D_k\in\R^{n\times n}$ such that,
	$$
	B_k = P_kD_kP_k^T.
	$$
	Using this in \eqref{eq::QNdescentlemma}, we obtain
	\begin{equation*}\label{eq::QNdescentDiag}
		f(x_{k+1}) \leq f(x_k) + \left\langle P_kD_kP_k^T\gf(x_k) ,-\gamma P_kP_k^T\gf(x_k) + \frac{L}{2}\gamma^2 P_kD_kP_k^T\gf(x_k)\right\rangle,
	\end{equation*}
	where we used the fact that $P_kP_k^T=\mathbb{I}_n$ to write $\gf(x_k) = P_kP_k^T\gf(x_k)$.
	We denote $g_k = P_k^T\gf(x_k)$ and get:
	\begin{align}
		\nonumber
		f(x_{k+1}) \leq f(x_k) + \left\langle P_k D_k g_k ,-\gamma P_k g_k + \frac{L}{2}\gamma^2 P_k D_k g_k\right\rangle
		\\ \label{eq::QNdescentDiag2}
		\iff f(x_{k+1}) \leq f(x_k) + \left\langle D_k g_k ,-\gamma g_k + \frac{L}{2}\gamma^2 D_k g_k\right\rangle,
	\end{align}
	where we used the fact that $P_k$ is orthogonal in the last line. Since $D_k$ is orthogonal, denoting by $(g_{k,i})_{i\in\{1,\ldots,n\}}$ and $(b_{k,i})_{i\in\{1,\ldots,n\}}$ the coordinates of $g_k$ and the eigenvalues of $B_k$, respectively, we deduce that
	\begin{align*}
		\left\langle D_k g_k ,-\gamma g_k + \frac{L}{2}\gamma^2 D_k g_k\right\rangle
		&=  \sum_{i=1}^n b_{k,i} g_{k,i}^2 \left(-\gamma + \frac{L}{2}\gamma^2 b_{k,i}\right)
		=\gamma \sum_{i=1}^n b_{k,i} g_{k,i}^2 \left(\frac{L}{2}\gamma b_{k,i}-1\right)
		\\
		&\leq \gamma \sum_{i=1}^n b_{k,i} g_{k,i}^2 \underbrace{\left(\frac{L}{2}\gamma C-1\right)}_{\leq 0}
		\leq 0,
	\end{align*}
	where for the last line we used the assumption that for all $k\in\N$ and $\forall i\in\{1,\ldots,n\}$, $0<c\leq b_{k,i}\leq C$ and that $\gamma\leq \frac{2}{CL}$.
	We use this in \eqref{eq::QNdescentDiag2}:
	\begin{equation}\label{eq::finaldescent}
		f(x_{k+1}) \leq f(x_k) - \gamma \left(1-\frac{L}{2}\gamma C\right)\sum_{i=1}^n b_{k,i} g_{k,i}^2 \leq f(x_k).
	\end{equation}
	So the sequence $(f(x_k))_{k\in\N}$ is non-increasing, and since $f$ is  a lower-bounded function, then $(f(x_k))_{k\in\N}$ converges.

	We now sum \eqref{eq::finaldescent} from $k=0$ to $K\in\N$,
	\begin{align}
		\nonumber
		\sum_{k=0}^{K} f(x_{k+1}) - f(x_k) \leq -\gamma\left(1-\frac{L}{2}\gamma C\right)\sum_{k=0}^{K}  \sum_{i=1}^n b_{k,i} g_{k,i}^2
		\\
		\iff \gamma\left(1-\frac{L}{2}\gamma C\right)\sum_{k=0}^{K}  \sum_{i=1}^n b_{k,i} g_{k,i}^2 \leq f(x_0) - f(x_{K+1}).\label{eq::sumdescent}
	\end{align}
	Since $f$ is lower bounded, the right-hand side of \eqref{eq::sumdescent} is uniformly bounded, so
	\begin{equation}\label{eq::boundedSum}
		\gamma\left(1-\frac{L}{2}\gamma C\right)
		\sum_{k=0}^{+\infty}  \sum_{i=1}^n b_{k,i} g_{k,i}^2 <+\infty
		\iff \sum_{k=0}^{+\infty} \langle B_k\gf(x_k),\gf(x_k)\rangle <+\infty.
	\end{equation}
	Finally, by assumption $B_k$ is positive definite with eigenvalues uniformly lower-bounded by $c>0$, therefore \eqref{eq::boundedSum} implies that $\sum_{k=0}^{+\infty} c\norm{\gf(x_k)}^2 <+\infty$, and thus in particular $\lim_{k\to+\infty} \norm{\gf(x_k)}=0$.

\end{proof}

%%%%%%%%%%%%%%%%%%%%%%%%%%%%%%%%%%%%%%%%%%%%%%%%%%%%%%%%%%

%%%%%%%%%%%%%%%%%%%%%%%%%%%%%%%%%%%%%%%%%%%%%%%%%%%%%%%%%%%%%%%%%%%%%%%%%%%%%%%%%%%%%

\section{ADDITIONAL DETAILS ON THE EXPERIMENTS}\label{app:detailsOnExp}
In this section we provide additional details on how to reproduce the experiments of Section~\ref{sec::practical}.

\subsection{The Model of Algorithm~\ref{algo::L2OQN}\label{app:model}}

The neural network used is exactly that described in Figure~\ref{fig::model}, we simply detail the FC and linear blocks.
The first coordinate-wise FC block is made of $3$ layers with output shapes $(6,12,3)$, the second-one has $2$ layers with output shapes $(12,1)$. We use $\mathrm{ReLU}$ activation functions for each layer except for the last layer of each block.
The linear layer is of size $6\times 1$, again with no bias.
The total number of parameter of the network is $216$. In comparison, the training set is made of $20$ problems in dimension $n=100$, thus $p=216$  is much smaller than $20\times 100 = 2000$. We also apply the algorithm to problems in dimension $500$ where even for a single problem $p<500$.

\subsection{Problems and Datasets\label{app::dataset}}

\paragraph{Quadratic functions}
To generate a quadratic function in dimension $n$, we proceed as follows. We create a matrix $A$ by first sampling its largest and smallest eigenvalues $\lambda_\mathrm{min}$, $\lambda_\mathrm{max}$ uniformly at random in $[0.1,1]$ and  $[1,50]$ respectively. We then generate the $n-2$ other eigenvalues of $A$ uniformly at random in $[\lambda_\mathrm{min},\lambda_\mathrm{max}]$. This gives us a diagonal matrix $D£$ containing the eigenvalues of $A$.
We then generate another matrix $B$ with Gaussian $\mathcal{N}(0,1)$ entries and make it symmetric via $B\gets B+B^{T}$. We then compute the orthogonal matrix $P$ that diagonalizes $B$ and use $P$ to build $A=PDP^T$. We then sample a vector $b\in\R^n$ whose entry are sampled uniformly at random in $[0,15]$. Our function $f$ finally reads: $f\colon x\in\R^n \mapsto \frac{1}{2}\norm{Ax-b}^2$. 
The quadratic functions in dimension $500$ (Figure~\ref{fig::quad500}) are generated with the same process.

With this process, the largest eigenvalue of $\nabla^2 f$ is $\frac{\lambda_{\mathrm{max}}^2}{2}$. In our experiments the largest eigenvalue in any problem is approximately $1159$ and the largest condition number (the ratio between the largest and smallest eigenvalues) is approximately $15156$, hence our dataset includes ill-conditioned problems.

\paragraph{Regularized Logistic Regression}
We consider a binary logistic regression problem, as presented in \cite{hastie2009elements}.
For the left plot of Figure~\ref{fig::realworld}, we generate two clouds of $M$ points sampled from Gaussian distributions $\mathcal{N}(\mu_1,1)$ and $\mathcal{N}(\mu_2,1)$ where $\mu_1$, $\mu_2$ are themselves sampled from $\mathcal{N}(-1,1)$ and $\mathcal{N}(1,1)$ respectively. We store the coordinates of the $2M$ points in a matrix $A\in\R^{2M\times (n+1)}$ (a row of ones is concatenated with $A$, see \citep{hastie2009elements}). We also create a vector $b\in\R^{2M}$ where each $b_i$ takes either the value $0$ or $1$ depending on which class the $i$-th data point belong to. Given these $A$ and $b$, for all $x\in\R^{n+1}$, the function $f$ is defined as:
$$
f(x) = \frac{1}{2M}\sum_{i=1}^{2M}\log\left(1 + e^{x^TA_i} \right) - b_i x^TA_i + \frac{\eta}{2}\norm{x}^2.
$$
The last term is a regularization that makes the problem strongly convex. We use a very small $\eta=10^{-3}$. Our experiments are done for $M=100$ and $n=50$.

The experiments on the \textsf{w8a} and \textsf{mushrooms} datasets (also on Figure~\ref{fig::realworld}) are also logistic regression experiments but with real-world data. The two datasets are publicly available but also provided in our public repository.

\paragraph{Ridge regression} Ridge regression consists in minimizing
$$
	f(x) = \norm{Ax-b}^2 + \frac{\lambda}{2}\norm{x}^2,
$$
where $\lambda>0$ is a regularization parameter, $A\in\R^{m\times n}$ and $b\in\R^m$. This is also a type of quadratic function, however the two experiments on the right of Figure~\ref{fig::realworld} are based on the \textsf{California housing} and \textsf{diabetes} datasets, which are real-world applications, hence different from the synthetic data we trained on. Moreover, in these datasets, the matrix $A$ is usually not square and $m>n$, which makes the solution to $Ax=b$ not unique. This also differs from the training setting.

\subsection{The BFGS Baseline}
For a fair comparison, the BFGS algorithm is implemented exactly like Algorithm~\ref{algo::L2OQN} but with $y_k=d_k$ instead of using learned model. We use the same strategy for initializing $B_{-1}$.
For both algorithms we generate a random starting point $x_{-1}\in\R^n$, and perform a gradient descent step along $\gf(x_{-1})$ to obtain the true initialization $x_0\in\R^n$. Both algorithms thus always start at the same $x_0$ with the same state $S_0 = \{x_{-1},\gf(x_{-1}),B_{-1}\}$. When using our algorithm with fixed step-size (during training and in most experiments), we compare it to BFGS with fixed step-size. When BFGS is used with line-search, then so is our algorithm.

\subsection{The L2O Baselines\label{app:L2Obaselines}}
We consider two popular L2O baselines: the method from \cite{andrychowicz2016learning}, referred to as LLGD and the RNNprop method from \cite{lv2017learning}. The implementation provided in our public repository is adapted from \citet{liu2023towards} \url{https://github.com/xhchrn/MS4L2O}. We consider two version of each L2O method: one trained in the training setting from the aforementioned papers, and one retrained on a setting closer to ours (deterministic, quadratic functions in dimension $n=100$). Our repository contains the weights used for each model.

\subsection{Training Strategy}
\textbf{Training set.} Our training dataset is made of $10$ quadratic functions in dimension $n=100$ created following the strategy described in Section~\ref{app::dataset}. We generate two different initializations at random for each function, yielding a training dataset of $20$ problems.

\textbf{Initialization of the network.} We initialize the parameter $\theta$ (the weights of our layers) by following the new initialization strategy introduced in Section~\ref{sec::practical}. Recall that with this strategy, before training $\theta$ our model coincides with BFGS, stabilizing the training process as shown on Figure~\ref{fig::train}.

\textbf{Training by unrolling.} We run the algorithm for $K=40$ iterations and use the loss function $\mathcal{L}(\theta)$ described in \eqref{eq::lossfunction}.
However, we observe in practice that unrolling the last iterate (\ie computing the gradient of $\mathcal{L}$ with respect to the last iterate $K$) is numerically unstable (known as the vanishing/exploding gradient problem). We mitigate this issue by computing $\mathcal{L}(\theta)$ every $5$ iterations (\ie at iterations $\{5,10,\ldots,40\}$) and by ``detaching'' the matrix $B_k$ every $5$ iterations (\ie neglecting the effect that old predictions of the model have on $B_k$). We then average the $8$ values of the loss computed along the trajectory and ``back-propagate'' to compute the gradient of this loss function. Since we neglect the influence that old iterates have on $B_k$, we do not compute ``true gradients'' of $\mathcal{L}$. Yet, it is important to note that this is acceptable since training is only a mean to obtain a good parameter $\theta$. This does not break any of our principles.

\textbf{Training parameters.} We train the model with the ADAM~\citep{kingma2014adam} optimizer with gradient clipping. We do not compute the full gradient $\nabla\mathcal{L}(\theta)$ but a mini-batch approximation of it by selecting only two problems at random at every iteration. We use the learning rate $10^{-4}$ for the FC layers and $10^{-3}$ for the linear layer. We save the model that achieved the best training loss on average over one epoch (a full pass on the training dataset).

\subsection{Computational Architecture\label{app:computeArchi}}
We ran all the experiments on a HP EliteBook 840 with 32 GiB of RAM, and an Intel Core Ultra 5 125U CPU with 12 cores at 4 GHz. No GPUs were used for the experiments.
We used Python~\citep{rossum1995python} 3.12.3, Numpy~\citep{walt2011numpy} 2.1.2 and Pytorch~\citep{paszke2019pytorch} 2.5.0 running on Ubuntu 24.04.

\subsection{Wall-clock Time Estimation \label{app::wallclock}}
We estimated the average wall-clock time per iteration for each algorithm on each problem. We note that the compute times reported are only estimations that are architecture and implementation dependant and may vary. The results reported in Table~\ref{tab::wallclock} were obtained with the architecture described in Section~\ref{app:computeArchi}.
In the table we report the average cost of one iteration for each algorithm relative to that of gradient descent, i.e., $\mathrm{exectime}(\mathrm{algo}) / \mathrm{exectime}(\mathrm{GD})$.

\begin{table}[!ht]
	\centering
	\caption{Average wall-clock time relative to gradient descent\label{tab::wallclock}}
	\begin{tabular}{l|ccccc|c}
		\toprule
		~ & \makecell{ridge\\ california} & \makecell{ridge\\ diabetes} & \makecell{logistic\\ w8a} & \makecell{logistic\\ mushrooms} & \makecell{logistic\\ synthetic} & \makecell{average \\speed-down}\\ 
		\midrule
		Gradient Descent & 1.00 & 1.00 & 1.00 & 1.00 & 1.00 & 1.00 \\ 
		Nesterov & 1.03 & 1.05 & 1.00 & 1.50 & 0.87 & 1.09 \\ 
		Heavy-ball & 1.10 & 1.16 & 1.00 & 2.38 & 0.96 & 1.32 \\ 
		BFGS & 2.40 & 2.99 & 0.97 & 1.41 & 1.70 & 1.89 \\ 
		L2LbyGD & 6.23 & 2.13 & 1.22 & 1.67 & 1.70 & 2.59 \\ 
		LOA BFGS & 3.51 & 3.27 & 1.46 & 2.35 & 2.95 & 2.71 \\ 
		RNNprop & 6.96 & 2.43 & 1.08 & 1.28 & 1.87 & 2.72 \\ 
		Newton & 7.91 & 10.56 & 190.23 & 40.34 & 8.32 & 51.47 \\ 
		\bottomrule
	\end{tabular}
\end{table}
%%%%%%%%%%%%%%%%%%%%%%%%%%%%%%%%%%%%%%%%%%%%%%%%%%%%%%%%%%

\begin{figure}[ht]
\centering
	\begin{tabular}{cc}
		\raisebox{2cm}{\rotatebox{90}{$ \frac{f(x_k)-f^\star}{f(x_0)-f^\star}$}}
		\includegraphics[height=.3\linewidth]{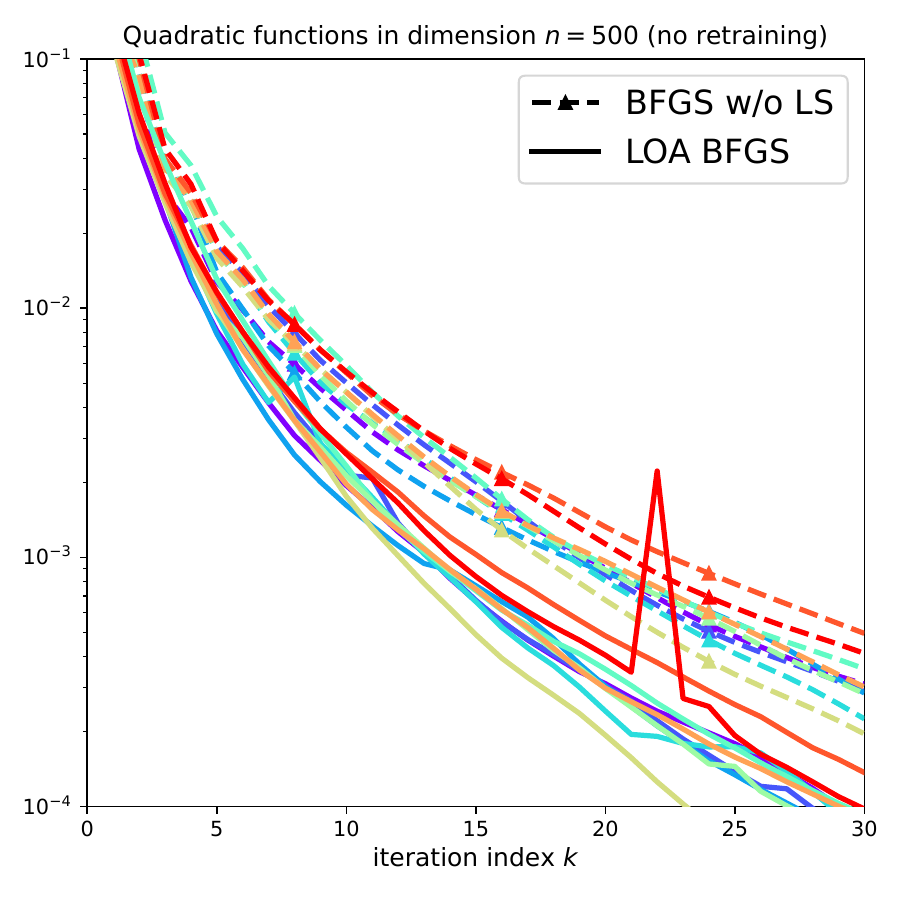}
		&
		\includegraphics[height=.3\linewidth]{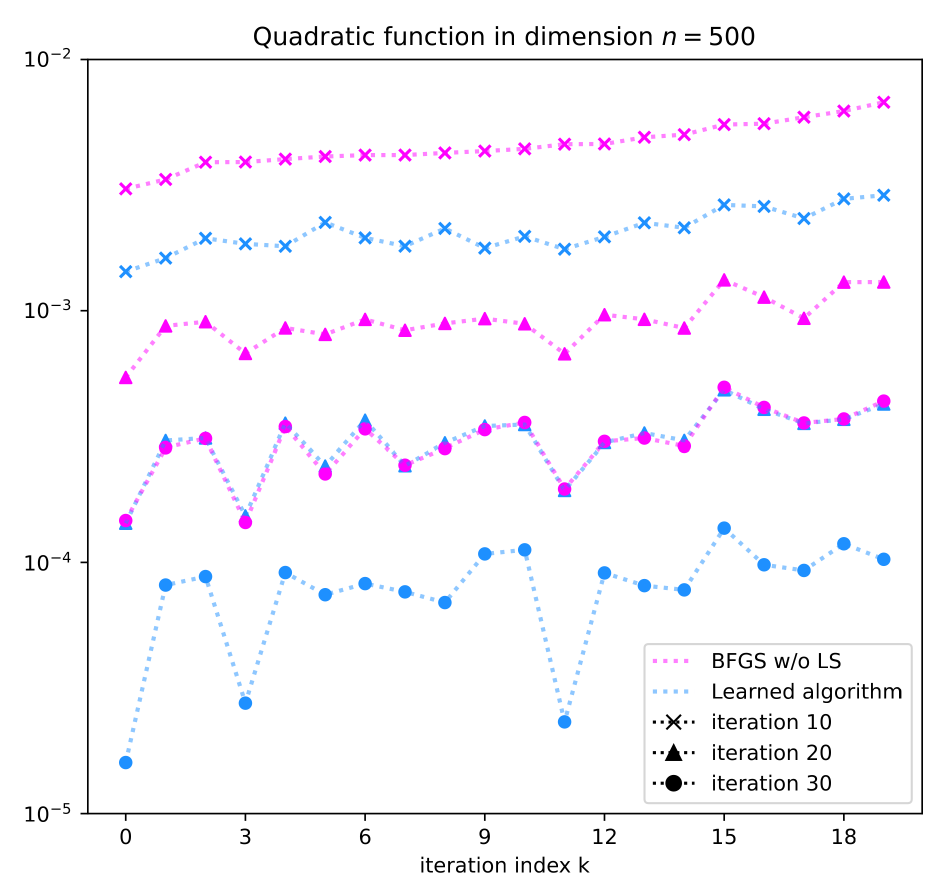}
	\end{tabular}
	\caption{Quadratic problems similar to that of the training setting but used in dimension $n=500$, \emph{without retraining}. Left: sub-optimility gap against iterations. Right: Relative sub-optimality gap for each problem at several stages of the optimization process. \label{fig::quad500}}
\end{figure}

\begin{figure}[ht]
	\centering
	\begin{tabular}{cc}
		\raisebox{2cm}{\rotatebox{90}{$ \frac{f(x_k)-f^\star}{f(x_0)-f^\star}$}}
		\includegraphics[height=.3\linewidth]{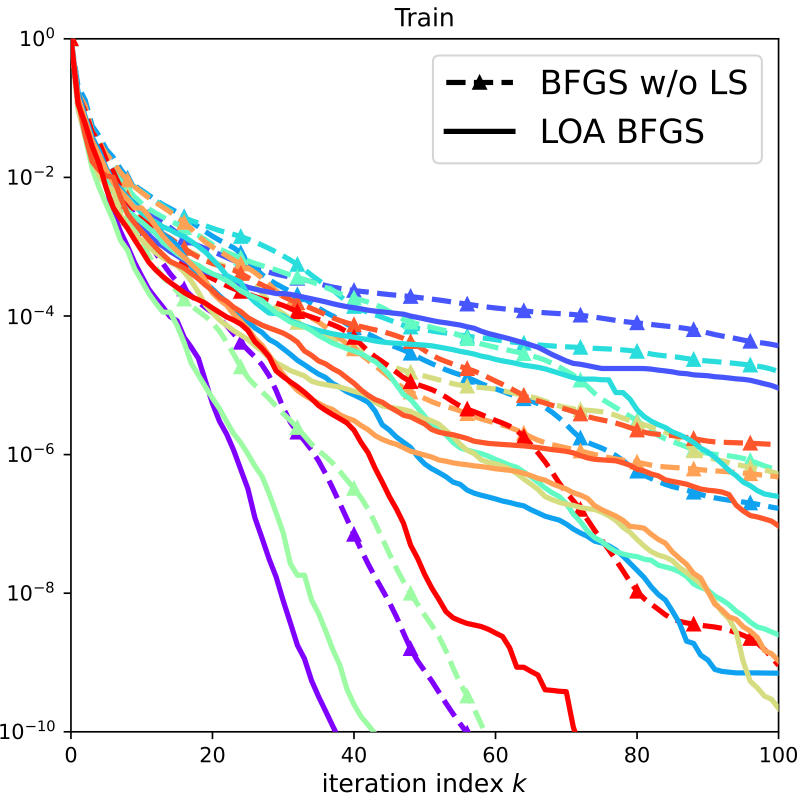}
		& \includegraphics[height=.3\linewidth]{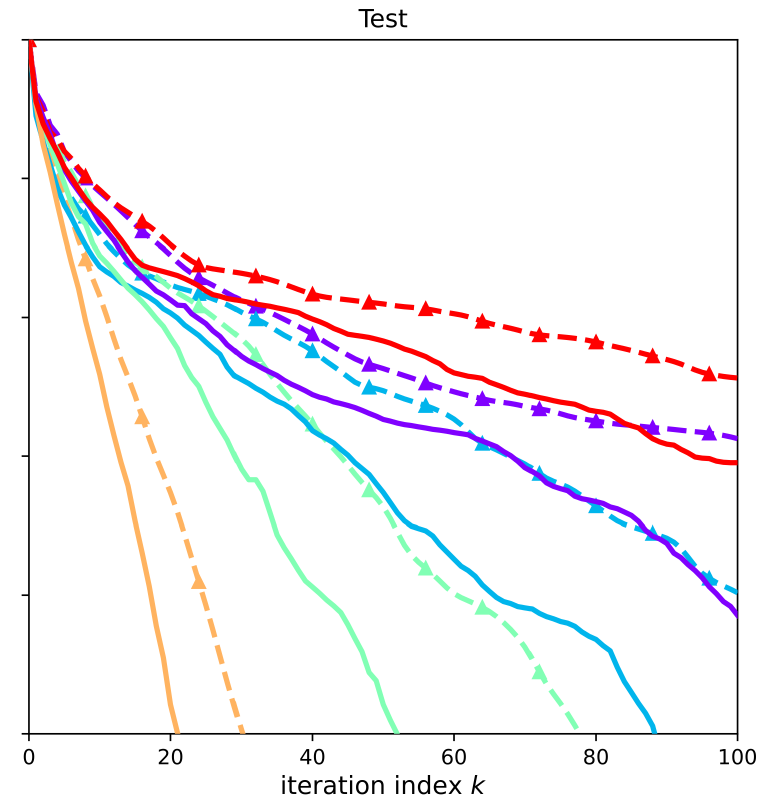}
		\\
		\raisebox{2cm}{\rotatebox{90}{$ \frac{f(x_k)-f^\star}{f(x_0)-f^\star}$}}
		\includegraphics[height=.3\linewidth]{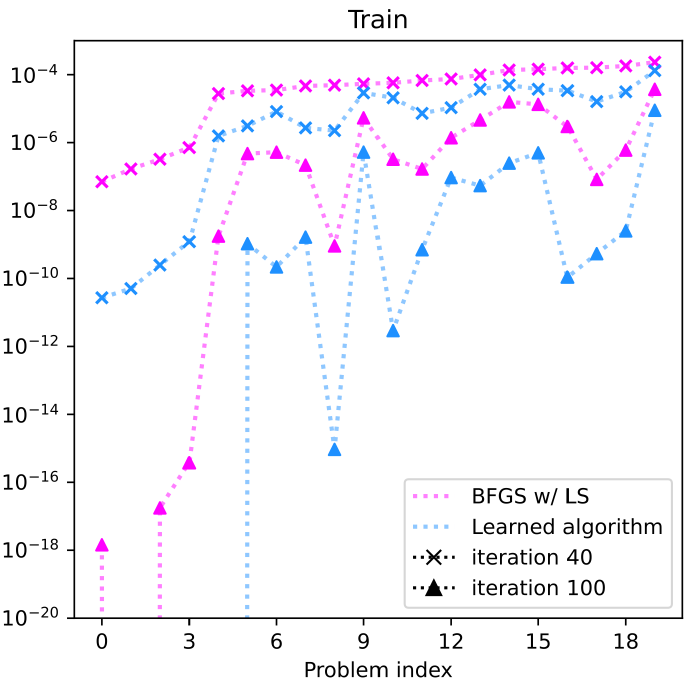}
		& \includegraphics[height=.3\linewidth]{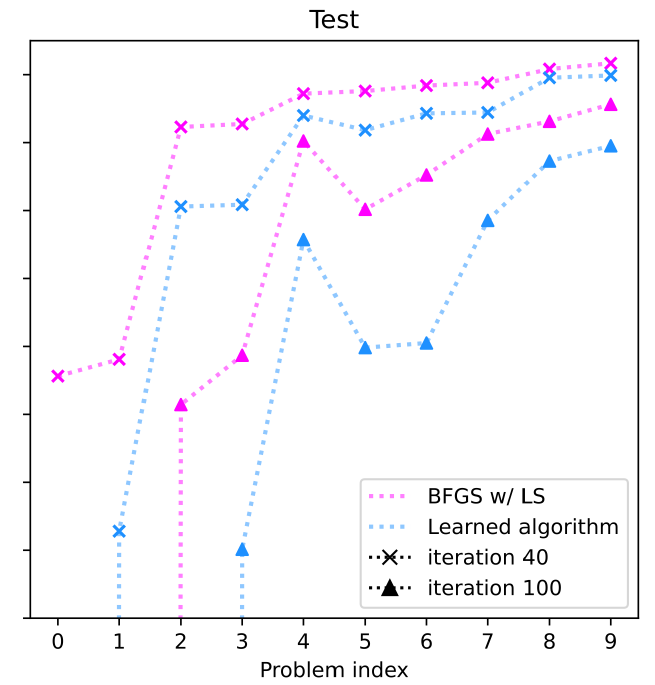}
	\end{tabular}
	\caption{
	Same experiment as Figure~\ref{fig::quad100} but we evaluate the performance of the algorithm used with line-search, \emph{without retraining} it.
		Top row: relative sub-optimality gap against iterations on the training and test sets. Each color represents a different problem. Bottom: relative sub-optimality gap for each problem after $40$ and $100$ iterations. \label{fig::quad100LS}
	}
\end{figure}

\begin{figure}[!h]
	\centering
	\includegraphics[height=.4\linewidth]{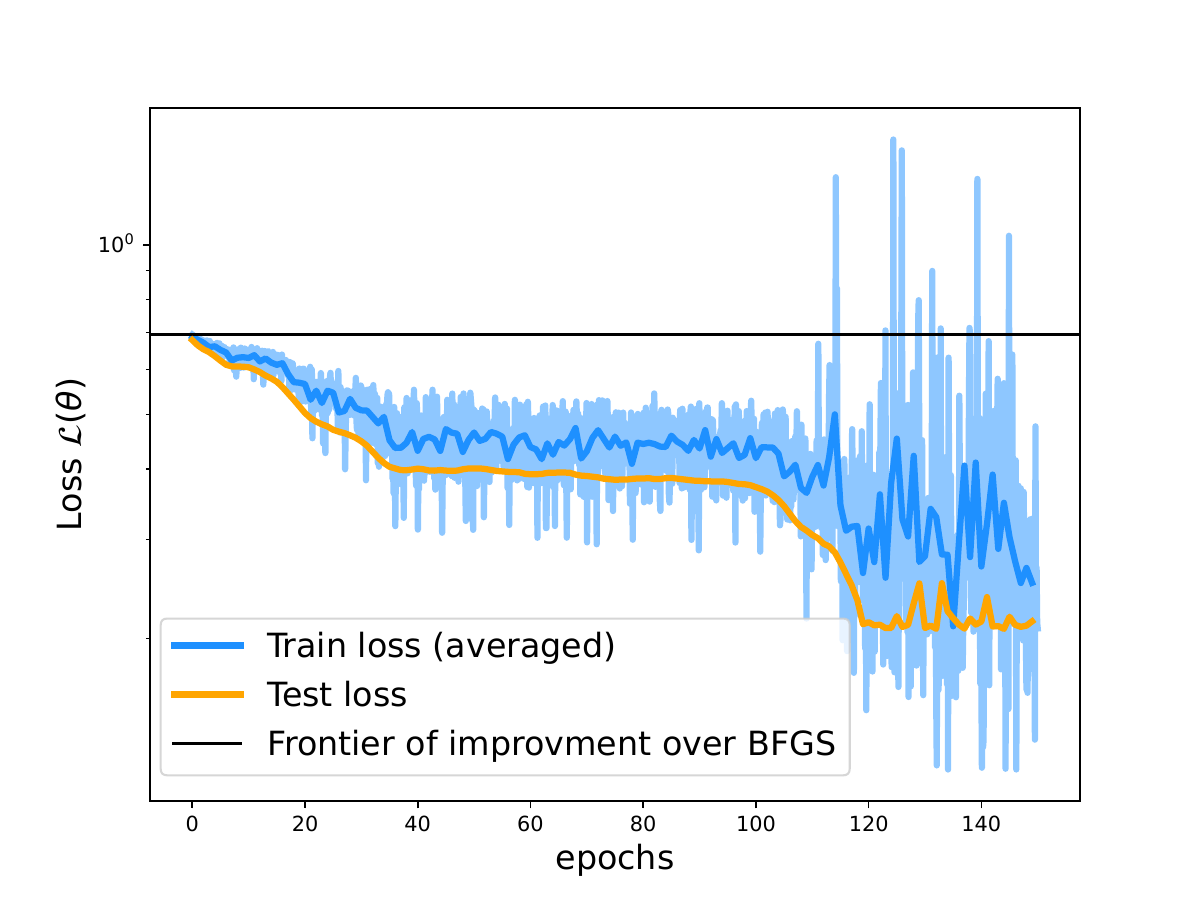}
	\includegraphics[height=.4\linewidth]{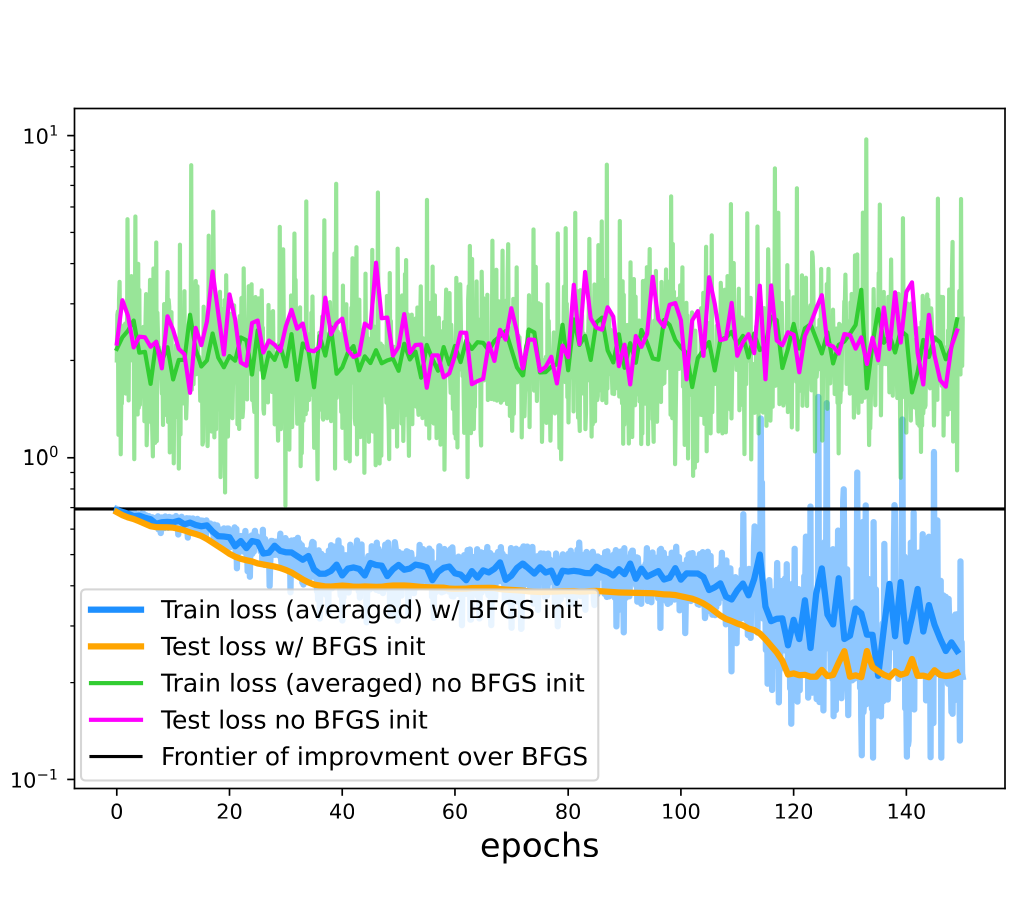}
	\caption{Left: evolution of the training loss and test loss during the training of the model of our algorithm. The blue area shows the value of the stochastic loss and the blue curve represents the average over one epoch. The test loss is computed after each epoch. The black line corresponds to $\log(2)$, any value of $\mathcal{L}(\theta)$ below this line corresponds to an improvement compared to vanilla BFGS.
		Right: Same figure as the left-hand side but comparison with the case where we did not initialize the model to coincide with BFGS, causing instability in training.
		\label{fig::train}}
\end{figure}

\section{ADDITIONAL EXPERIMENTS} \label{app::Exp}

\paragraph{Compatibility with larger-dimensional problems}
Figure~\ref{fig::quad500} shows that our LOA still performs well and consistently outperforms vanilla BFGS in dimension $n=500$ despite having been trained on problems in dimension $n=100$. We note that the improvement is not as dramatic as in dimension $n=100$, yet we managed to transfer good performance in much larger problem than those of the training set, which was our main goal in this experiment.

\paragraph{Compatibility with line-search} Like Newton's method, one usually wants to use QN methods with a step-size $\gamma$ as close as possible to $1$. This may however cause numerical instabilities (\eg in the logistic regression problems). Therefore, QN algorithms, including BFGS are often used with line-search strategy (adapting the step-size based on some rules). It is thus important to evidence that Algorithm~\ref{algo::L2OQN} performs well when used with line-search strategies, \emph{despite having been trained with fixed step-sizes}. The results in Figure~\ref{fig::quad100LS} show that Algorithm~\ref{algo::L2OQN} significantly outperforms BFGS with line-search on almost all problems. This was also the case on the logistic regression problems of Figure~\ref{fig::realworld} where we used line-search for Newton's method, BFGS and our algorithm. Our LOA thus appears to be highly compatible with line-search.

\paragraph{Benefits of our initialization strategy} As mentioned in Section~\ref{sec::practical}, we can easily find a closed-form initialization of the model $\M_\theta$ such that Algorithm~\ref{algo::L2OQN} coincides with BFGS before training. This dramatically stabilizes the training process as shown on Figure~\ref{fig::train} where, without this initialization strategy, the average train loss remains large (despite having tuned the learning rate specifically for that setting). This can be explained by the fact that for a random initialization, the value $f(x_K)$ produced by the algorithm will usually be very large, making it necessary to train with small learning rates, whereas with our strategy we start in a more stable region as evidenced by the smaller oscillations in early training, allowing larger learning rates.

\FloatBarrier

%%%%%%%%%%%%%%%%%%%%%%%%%%%%%%%%%%%%%%%%%%%%%%%%%%%%%%%%%%
%%%%%%%%%%%%%%%%%%%%%%%%%%%%%%%%%%%%%%%%%%%%%%%%%%%%%%%%%%

\end{document}